\documentclass{article}

\usepackage{style/iclr2027_conference}

\usepackage{times}
\usepackage{natbib}
\usepackage{hyperref}
\usepackage{etoc}
\usepackage{booktabs}
\usepackage{amsmath}
\usepackage{amssymb}
\usepackage{amsthm}
\newtheorem{proposition}{Proposition}
\newtheorem{assumption}{Assumption}
\newtheorem{corollary}{Corollary}
\newcommand{\defeq}{\mathrel{:=}}
\usepackage{graphicx}
\usepackage{tabularx}
\usepackage{xcolor}
\usepackage{listings}
\usepackage{float}
\usepackage{placeins}
\usepackage{pdflscape}
\usepackage{array}
\usepackage{enumitem}

\lstdefinestyle{caseStudyTranscript}{
  basicstyle=\ttfamily\scriptsize,
  breaklines=true,
  breakatwhitespace=false,
  columns=fullflexible,
  keepspaces=true,
  showstringspaces=false,
  frame=single,
  xleftmargin=0.5em,
  xrightmargin=0.5em,
  literate={→}{{$\rightarrow$}}1
}

\definecolor{casepromptblue}{RGB}{31,86,140}
\definecolor{caseanswergreen}{RGB}{20,110,70}
\definecolor{casemetagray}{RGB}{120,120,120}
\definecolor{caseboxbg}{RGB}{247,248,250}
\definecolor{caseboxrule}{RGB}{200,205,212}
\definecolor{casetitlebar}{RGB}{31,86,140}

\newcommand{\casetitle}[2]{%
  \par\noindent\colorbox{casetitlebar}{\parbox{\dimexpr\linewidth-2\fboxsep}{%
    \strut\color{white}\ttfamily\small\textbf{#1}\hfill\normalfont\itshape\footnotesize #2\strut}}%
  \nopagebreak\par\nointerlineskip\vspace{1pt}}

\lstdefinestyle{caseTranscript}{
  basicstyle=\ttfamily\scriptsize,
  breaklines=true,
  breakatwhitespace=false,
  columns=fullflexible,
  keepspaces=true,
  showstringspaces=false,
  backgroundcolor=\color{caseboxbg},
  frame=single,
  rulecolor=\color{caseboxrule},
  framexleftmargin=3pt, xleftmargin=3pt, xrightmargin=3pt, aboveskip=0pt,
  moredelim=[l][\color{casepromptblue}\bfseries]{[PROMPT]},
  moredelim=[l][\color{casemetagray}\itshape]{[MODEL},
  moredelim=[l][\color{casemetagray}\bfseries]{<think>},
  moredelim=[l][\color{casemetagray}\bfseries]{</think>},
  moredelim=[l][\color{caseanswergreen}\bfseries]{<answer>},
  literate={→}{{$\rightarrow$}}1 {—}{{---}}1
}

\newcounter{algorithm}

\title{Continual Reasoning Gym:\\
Diagnosing and Harnessing Shared Reasoning\\
in Continual RLVR}

\author{
Lirui Luo\textsuperscript{1,2} \quad
Guoxi Zhang\textsuperscript{2} \quad
Hongming Xu\textsuperscript{2} \\
Rongqing Li\textsuperscript{4} \quad
Cong Fang\textsuperscript{1,3} \quad
Lifeng Fan\textsuperscript{2} \\[0.5em]
{\normalfont\small \textsuperscript{1}State Key Lab of General AI, School of Intelligence Science and Technology, Peking University} \\
{\normalfont\small \textsuperscript{2}State Key Laboratory of General Artificial Intelligence, BIGAI} \\
{\normalfont\small \textsuperscript{3}Institute for Artificial Intelligence, Peking University} \\
{\normalfont\small \textsuperscript{4}Beijing Institute of Technology} \\
{\normalfont\small Project page: \url{https://crg-rl.github.io/}}
}

\hypersetup{
  hidelinks,
  pdftitle={Continual Reasoning Gym: Diagnosing and Harnessing Shared Reasoning in Continual RLVR},
  pdfauthor={Lirui Luo, Guoxi Zhang, Hongming Xu, Rongqing Li, Cong Fang, Lifeng Fan}
}

\iclrfinalcopy
\begin{document}

\maketitle

\begin{abstract}
Reinforcement learning with verifiable rewards (RLVR) commonly post-trains
reasoning models on multiple tasks, while rerunning multitask RLVR (MTRL) as
new tasks are added makes capability expansion costly. We therefore study
continual RLVR,
which updates the existing model as each task arrives.
The central question is whether a model updated this way can perform as well
as a jointly trained model. To answer this question, we
introduce
\textbf{Continual Reasoning Gym}, a continual-RLVR environment that organizes
text and visual reasoning tasks into five task sequences. In this setting,
we identify two key observations: Sequential RLVR exhibits modest forgetting,
yet its final performance remains below that of MTRL. To
understand the latter, we decompose final performance and show that forgetting
accounts for only part of the gap. To explain the former, we identify
\emph{shared reasoning}: transferable reasoning structure allows
training on one task to support others on average. We therefore
introduce \textbf{Continual Prompt Replay (CPR)}, which harnesses shared reasoning
to improve learning on the arriving and future tasks by replaying previous-task
prompts and regenerating their responses with the current policy. On average,
only CPR reaches MTRL-level performance.

\end{abstract}

\section{Introduction}

Reinforcement learning with verifiable rewards (RLVR) commonly post-trains
reasoning models on task mixtures fixed before training
\citep{reasoninggym2025,visulogic2025,agentgymrl2025,guru2025}. A deployed
coding agent, for example, may later need to adapt to newly released APIs or
changing user preferences. Rerunning multitask RLVR (MTRL) over the enlarged task mixture for every
new requirement is computationally costly. We therefore study continual RLVR,
which updates the existing model as each task arrives.

The central question is whether a model updated this way can perform as well
as a jointly trained model \citep{momeni2025}. Catastrophic forgetting is the
standard explanation for why sequential learning trails joint training
\citep{ewc2017,trace2023}. Yet reinforcement learning has recently been found
to forget less than supervised fine-tuning (SFT), suggesting that continual RLVR
may likewise exhibit less forgetting \citep{rlsrazor2025,rftpreserve2025}.

To test whether this lower forgetting allows a continually updated model to
match a jointly trained one, we introduce Continual Reasoning Gym (CRG), which
organizes text and visual reasoning tasks into task sequences. On CRG, we
identify two key observations: a sequential RLVR (Seq.\ RLVR) baseline exhibits
modest forgetting, yet its final performance remains below MTRL.

To understand the latter, we derive an exact decomposition of final performance
and show that forgetting accounts for only part of the gap. The remaining gap
reflects how well the model learns each arriving task and how earlier training
benefits future tasks. To explain the former, we identify
\emph{shared reasoning}: transferable reasoning structure allows training on one task to
benefit others on average. We provide empirical evidence for this pattern through
a task-gradient alignment analysis and a behavioral case study. We therefore
introduce Continual Prompt Replay (CPR),
which harnesses shared reasoning to improve learning on the arriving and future
tasks by bringing previous-task prompts into current-policy training
(Figure~\ref{fig:shared_reasoning_teaser}). CPR is the only continual learning
(CL) method to reach MTRL-level performance on average.

\begin{figure}[H]
  \centering
  \includegraphics[width=0.98\linewidth]{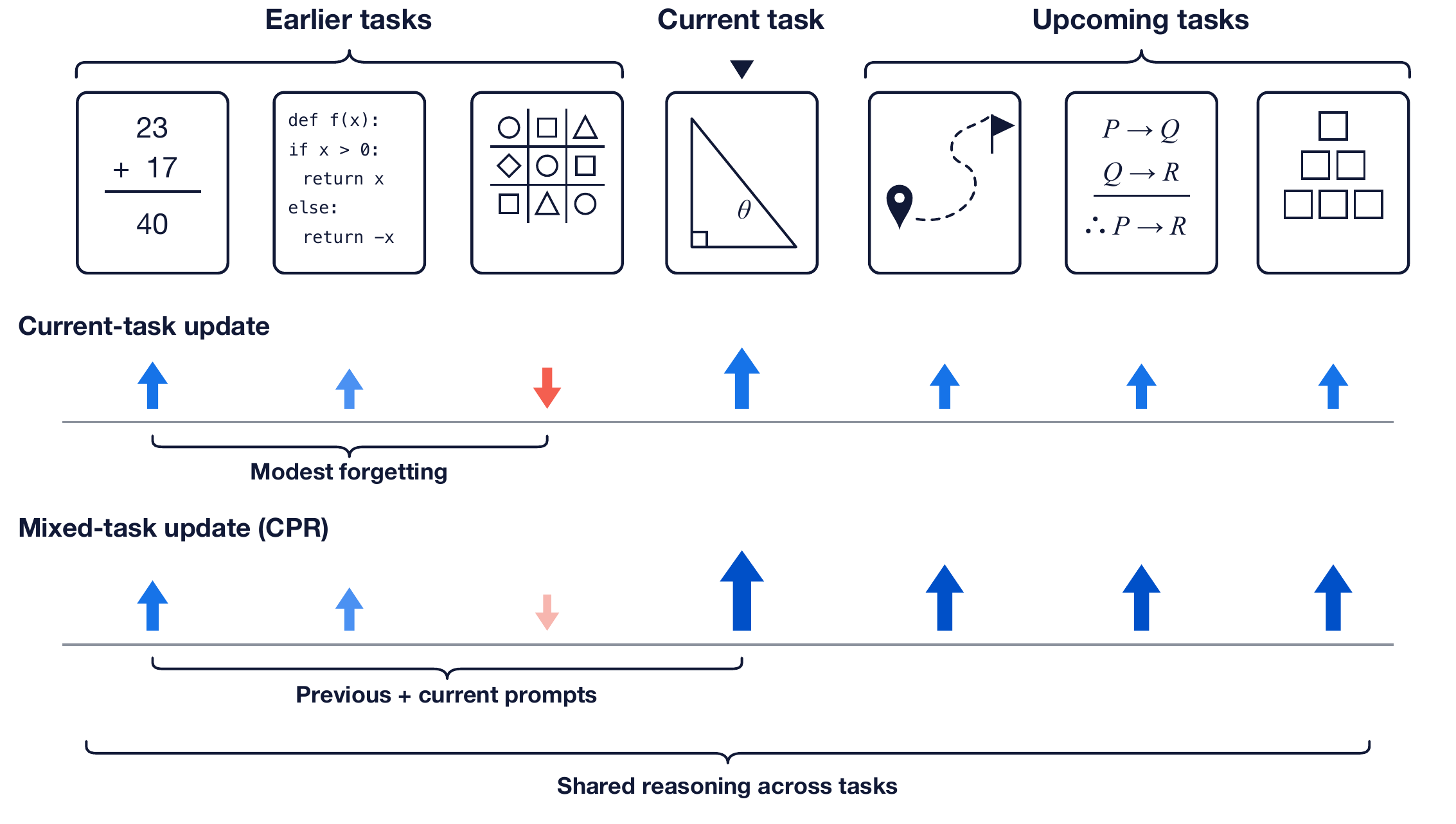}
  \caption{\textbf{Shared reasoning explains modest forgetting while CPR
  harnesses it to improve performance.}
  Later-task updates benefit earlier tasks on average, helping explain modest
  forgetting. CPR replaces some current-task prompts with previous-task prompts
  to harness shared reasoning and improve learning on the arriving and future
  tasks.}
  \label{fig:shared_reasoning_teaser}
\end{figure}

Our contributions are threefold:
\begin{enumerate}
  \item \textbf{A continual-RLVR environment.} We introduce CRG with five task
  sequences spanning text and visual reasoning,
  together with implementations of diverse CL methods.
  \item \textbf{A finding of shared reasoning.} We identify shared reasoning,
  whereby training on one task benefits others on average. It explains modest
  forgetting and can be harnessed to improve learning on the arriving and future
  tasks. We formalize this finding and provide empirical evidence from
  task-gradient measurements and a behavioral case study.
  \item \textbf{A continual-RLVR method.} We introduce CPR, which harnesses
  shared reasoning by replaying previous-task prompts and regenerating their
  responses with the current policy. CPR is the only CL method to reach
  MTRL-level performance on average.
\end{enumerate}

\section{Related Work}

\paragraph{Continual reinforcement-learning environments.}
Continual reinforcement-learning environments expose agents to ordered
sequences of interactive tasks and evaluate how learning evolves across stages.
Existing environments span robotic manipulation, embodied navigation, and
cooperative multi-agent tasks
\citep{continualworld2021,libero2023,coom2023,meal2025}. CRG applies this staged protocol to reasoning models trained with RLVR,
using programmatic verifiers to score generated responses.

\paragraph{CL benchmarks.}
CL benchmarks study sequential adaptation of large language models (LLMs),
vision--language models (VLMs), and video--language models under stage-wise
evaluation
\citep{trace2023,climb2022,vilcobench2024}. These benchmarks primarily use supervised updates on fixed input--target
pairs. CRG instead evaluates stage-wise learning and transfer under continual
RLVR, where rewards on policy-generated responses make both trajectories and
training signals policy-dependent.

Table~\ref{tab:positioning} summarizes these distinctions.
Appendix~\ref{app:extended_related_work} discusses additional related work.

\begin{table}[t]
  \centering
  \small
  \caption{\textbf{Positioning CRG at the intersection of CL and RLVR.} CRG combines reasoning
  tasks, a task sequence, RLVR training, and multiple
  CL methods.
  \checkmark{}~= present, $\times$~= absent.}
  \label{tab:positioning}
  \setlength{\tabcolsep}{4pt}
  \renewcommand{\arraystretch}{1.15}
  \begin{tabularx}{\linewidth}{X cccc}
    \toprule
    & Reasoning tasks & Task sequence & RLVR training & CL methods \\
    \midrule
    TRACE \citep{trace2023}                  & \checkmark & \checkmark & $\times$   & \checkmark \\
    CLiMB \citep{climb2022}                  & \checkmark & \checkmark & $\times$   & \checkmark \\
    Continual World \citep{continualworld2021} & $\times$ & \checkmark & $\times$ & \checkmark \\
    LIBERO \citep{libero2023}                & $\times$   & \checkmark & $\times$   & \checkmark \\
    Reasoning Gym \citep{reasoninggym2025}   & \checkmark & $\times$   & \checkmark & $\times$ \\
    VisuLogic \citep{visulogic2025}          & \checkmark & $\times$   & \checkmark & $\times$ \\
    AgentGym-RL \citep{agentgymrl2025}       & \checkmark & $\times$   & \checkmark & $\times$ \\
    \midrule
    \textbf{Continual Reasoning Gym (ours)}  & \checkmark & \checkmark & \checkmark & \checkmark \\
    \bottomrule
  \end{tabularx}
\end{table}

\section{Preliminaries}\label{sec:prelim}

\noindent\textbf{Continual RL.}\spacefactor=1000\space\space%
A continual RL problem chains an ordered sequence of tasks
$\tau_1,\dots,\tau_T$, each modeled as an MDP, that a single agent learns one
at a time \citep{continualworld2021}. This paper studies the RLVR
specialization, where each task carries a programmatic verifier and every
training stage consists of RLVR updates. Starting from a base policy $M_0$,
stage $i$ trains on task $\tau_i$ and produces $M_i$ from $M_{i-1}$, and the
goal is a final policy with the highest average success across the $T$ tasks.

\noindent\textbf{Token-level Markov Decision Process.}\spacefactor=1000\space\space%
Task $\tau_i$ provides a prompt distribution $\mathcal{D}_i$ and a programmatic
verifier $r_i(x,y)\in\{0,1\}$ for a prompt $x\sim\mathcal{D}_i$ and completed
response $y\in\mathcal{V}^*$. We model response generation on $\tau_i$ as a
token-level MDP $(\mathcal{S},\mathcal{A},P,\bar r_i,\rho_i)$
\citep{suttonbarto2018}. Its state
space $\mathcal{S}$ contains prompt-conditioned token prefixes, its action space
$\mathcal{A}=\mathcal{V}$ is the vocabulary including the terminal token
$\texttt{eos}$, and its initial-state distribution $\rho_i$ draws a prompt
$x\sim\mathcal{D}_i$ as $s_1=x$. The deterministic transition appends the
sampled token,
$s_{t+1}=P(s_t,y_t)=\operatorname{concat}(s_t,y_t)$. The token-level reward
$\bar r_i$ is zero before termination and equals $r_i(x,y)$ when
$\texttt{eos}$ completes response $y$. Thus an undiscounted rollout sampled
from $\pi_\theta$ has return $r_i(x,y)$, and the RLVR objective is
\begin{equation}\label{eq:rlvr-objective}
J_i(\theta)=\mathbb{E}_{x\sim\mathcal{D}_i,\,y\sim\pi_\theta(\cdot\mid x)}[r_i(x,y)].
\end{equation}

\FloatBarrier
\section{Continual Reasoning Gym}\label{sec:benchmark}

In this section, we instantiate the continual RLVR setting from
Section~\ref{sec:prelim} with five sequences of verifiable reasoning tasks.

\begin{figure}[t]
  \centering
  \includegraphics[width=0.98\linewidth]{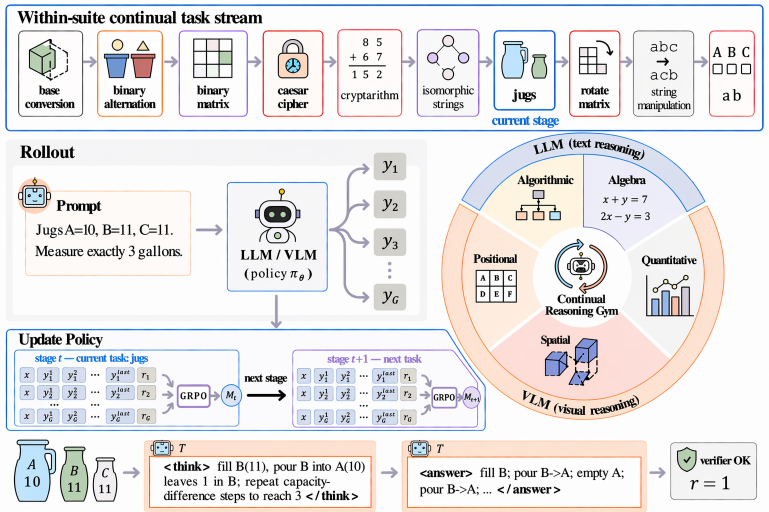}
  \caption{\textbf{Policy training proceeds stage by stage through a sequence
  of verifiable reasoning tasks.} \emph{Top}: the LLM algorithmic sequence contains
  $T{=}10$ tasks, with jugs as the current task. \emph{Left}: at each stage, the
  policy rolls out responses $y_1,\dots,y_G$ to a prompt. A verifier scores the
  responses, and Group Relative Policy Optimization (GRPO) uses these scores to
  produce the stage policy $M_t$.
  \emph{Wheel}: the five task sequences cover two text-reasoning settings and
  three visual-reasoning settings. \emph{Bottom}: a jugs example shows the
  policy's reasoning and verifier-checked answer.}
  \label{fig:benchmark_schematic}
\end{figure}

\subsection{Task Sequences}
\label{subsec:task-streams}

A task sequence turns a static task pool into a staged, non-stationary training
stream. For each sequence, we select several related but distinct tasks and
train them sequentially, allocating $N/T$ steps to each under a total budget of
$N$ steps (Figure~\ref{fig:benchmark_schematic}). We instantiate five task sequences across
two modalities. The two text settings draw
from Reasoning Gym's largest category \texttt{algorithmic} and its dedicated
\texttt{algebra} category, covering procedural symbolic manipulation and
symbolic mathematics \citep{reasoninggym2025}. The three visual settings take
VisuLogic's three largest reasoning categories, namely quantitative, spatial,
and positional \citep{visulogic2025}. Within each category, we define the
ordered stage-level subtypes
that form the corresponding sequence. Appendix~\ref{app:task_streams} reports
dataset statistics, and Appendix~\ref{app:task_examples} provides
representative examples.

\subsection{Baseline Algorithms}
\label{subsec:baselines}

We use Seq.\ RLVR as the default baseline. At each stage, it continues
training the policy with standard RLVR on the arriving task. MTRL jointly
optimizes the same task set and serves as the matched multitask reference.
Five CL interventions extend Seq.\ RLVR. Elastic Weight Consolidation (EWC)
\citep{ewc2017} penalizes changes to important weights. KL-to-old-policy
penalizes divergence from the pre-task policy. Orthogonal Subspace Fine-Tuning
(OSFT) \citep{osft2025} constrains updates to the orthogonal complement of
critical directions. ReDo \citep{redo2023} reinitializes dormant units, while
FIRE \citep{fire2026} uses Frobenius-isometry reinitialization to balance
stability and plasticity. We additionally include Muon \citep{muon2025}, which
orthogonalizes momentum updates, as an optimizer control.

\subsection{Evaluation Quantities}\label{sec:metrics}

The policy is evaluated on every task before training and after each task stage.
From these evaluations, we report BaseAvg and FinalAvg for the two endpoints.
To characterize what happens between them, we use forward transfer (FWT),
task-learning gain (TLG), and backward transfer (BWT). FWT captures the effect
of earlier-task training before a task's own stage, TLG the effect of direct
training on that task, and BWT the effect of subsequent training on later tasks.
To compare final performance with the matched multitask reference across task
sequences, we report the continual-to-multitask ratio (CTM):\label{sec:closeness}
\begin{equation}\label{eq:ctm}
\mathrm{CTM}=\frac{\mathrm{FinalAvg}}{\mathrm{FinalAvg}_{\mathrm{MTRL}}},
\end{equation}
where $\mathrm{FinalAvg}_{\mathrm{MTRL}}$ is the multitask reference trained
on the same task set. A CTM of $1$ indicates MTRL-level performance.
Further details are provided in
Appendix~\ref{app:metric-definitions}.

\FloatBarrier
\FloatBarrier
\section{Continual Prompt Replay}\label{sec:method}

In this section, we first decompose final performance to show why the
MTRL gap remains despite modest forgetting. We then formalize shared
reasoning to explain modest forgetting and motivate CPR, which brings
previous-task prompts into current-policy training to improve FWT and TLG.

\subsection{Decomposing the MTRL Gap}\label{subsec:where-gap}

\paragraph{Seq.\ RLVR remains below MTRL despite modest forgetting.}
We first test whether forgetting accounts for the gap. Under Seq.\ RLVR,
training on later tasks reduces performance on previously learned tasks by only
$2.47$ percentage points on average across the five task sequences, yet final
performance reaches only $88\%$ of MTRL (Figure~\ref{fig:transfer_corr}).
To determine what accounts for the gap beyond this modest forgetting, we
decompose $\mathrm{FinalAvg}$ as follows.

\begin{proposition}[Final Performance Beyond Forgetting]\label{prop:decomposition}
\begin{equation}\label{eq:decomposition}
\mathrm{FinalAvg}=\mathrm{BaseAvg}
+\tfrac{T-1}{T}\,\mathrm{FWT}
+\mathrm{TLG}
+\tfrac{T-1}{T}\,\mathrm{BWT}.
\end{equation}
\end{proposition}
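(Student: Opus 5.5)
The plan is to write the accuracy matrix explicitly and telescope. Let $R_{j,i}$ denote the success rate of policy $M_j$ on task $\tau_i$, for $j=0,\dots,T$ and $i=1,\dots,T$. Then $\mathrm{BaseAvg}=\frac1T\sum_i R_{0,i}$ and $\mathrm{FinalAvg}=\frac1T\sum_i R_{T,i}$. I will adopt the standard continual-learning definitions, which I take to be those of Appendix~\ref{app:metric-definitions}:
\[
\mathrm{FWT}=\tfrac{1}{T-1}\sum_{i=2}^{T}\bigl(R_{i-1,i}-R_{0,i}\bigr),\qquad
\mathrm{BWT}=\tfrac{1}{T-1}\sum_{i=1}^{T-1}\bigl(R_{T,i}-R_{i,i}\bigr),\qquad
\mathrm{TLG}=\tfrac1T\sum_{i=1}^{T}\bigl(R_{i,i}-R_{i-1,i}\bigr).
\]
Under these definitions, FWT and BWT average over the $T-1$ tasks for which they are non-vacuous, while TLG averages over all $T$ tasks. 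This asymmetry is exactly what should produce the $\tfrac{T-1}{T}$ factors in \eqref{eq:decomposition}.

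The key step is a per-task three-term telescoping identity:
\[
R_{T,i}-R_{0,i}=\bigl(R_{i-1,i}-R_{0,i}\bigr)+\bigl(R_{i,i}-R_{i-1,i}\bigr)+\bigl(R_{T,i}-R_{i,i}\bigr).
\]
For $i=1$ the first bracket vanishes, because $M_{i-1}=M_0$. For $i=T$ the last bracket vanishes, because $M_i=M_T$. So the sum over all $i$ of the first bracket equals the sum over $i\ge2$, and the sum of the last bracket equals the sum over $i\le T-1$.

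Then I sum over $i$ and divide by $T$. The left side becomes $\mathrm{FinalAvg}-\mathrm{BaseAvg}$. The first bracket gives $\frac1T\cdot(T-1)\,\mathrm{FWT}$, the middle bracket gives $\mathrm{TLG}$, and the last gives $\frac{T-1}{T}\,\mathrm{BWT}$. Rearranging yields \eqref{eq:decomposition}, and no inequalities are involved.

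The only real obstacle is confirming that the appendix's definitions match these normalizations: FWT and BWT averaged over $T-1$ tasks, TLG over $T$ tasks, all on the same per-task evaluation matrix. I would first check this against Appendix~\ref{app:metric-definitions}. If, say, TLG were instead normalized differently, I would simply carry that constant through the same telescoping. The identity is exact given the definitions, so the proposition is purely a bookkeeping consequence of them.
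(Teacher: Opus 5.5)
Your proposal is correct and matches the paper's proof: the paper also writes $F_i=B_i+(P_i-B_i)+(A_i-P_i)+(F_i-A_i)$, averages over the $T$ tasks, drops the boundary terms $P_1-B_1$ and $F_T-A_T$, and rescales the $\tfrac{1}{T}$ sums into $\tfrac{T-1}{T}\,\mathrm{FWT}$ and $\tfrac{T-1}{T}\,\mathrm{BWT}$. Your normalizations agree exactly with Appendix~\ref{app:metric-definitions}, and your remark that the boundary terms vanish because $P_1=R_{0,1}=B_1$ and $A_T=R_{T,T}=F_T$ is precisely the content of the paper's conventions $P_1\defeq B_1$ and $A_T\defeq F_T$.
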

\begin{proof}
See Appendix~\ref{app:decomposition-proof}.
\end{proof}

\begin{figure}[H]
  \centering
  \includegraphics[width=0.95\linewidth]{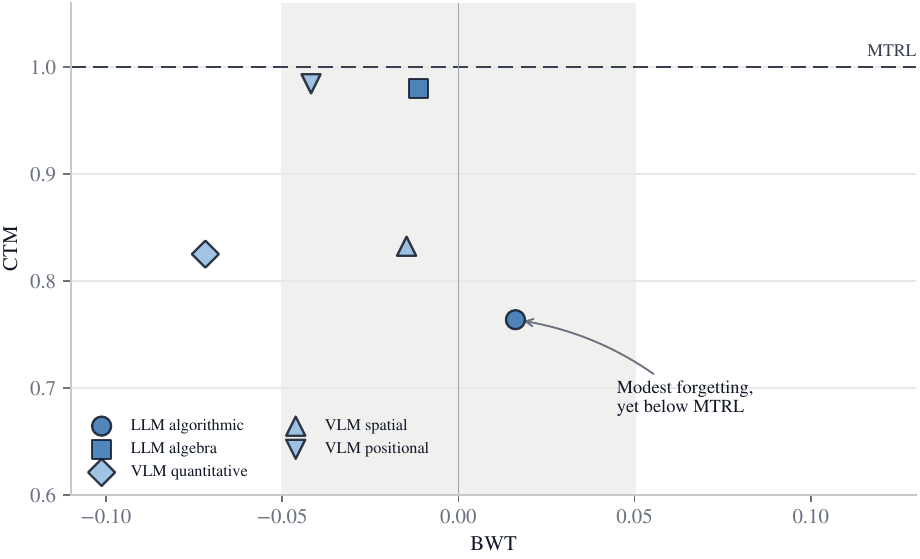}
  \caption{\textbf{Seq.\ RLVR remains below MTRL despite modest
  forgetting.} Each marker represents one task sequence. Across task sequences,
  performance on earlier tasks falls by only $2.47$ percentage points on
  average, yet final performance reaches $88\%$ of MTRL.}
  \label{fig:transfer_corr}
\end{figure}

Since all methods within a task sequence start from the same initial policy, $\Delta\mathrm{BaseAvg}=0$. Thus,
\begin{equation}\label{eq:matched-finalavg-delta}
\Delta\mathrm{FinalAvg}
=\tfrac{T-1}{T}\Delta\mathrm{FWT}
+\Delta\mathrm{TLG}
+\tfrac{T-1}{T}\Delta\mathrm{BWT}.
\end{equation}
Even if the weighted BWT contribution were zero, the gap to MTRL would
remain. We therefore ask two questions: why Seq.\ RLVR exhibits modest
forgetting, and how the remaining FWT and TLG gaps can be reduced.

\subsection{Shared Reasoning Explains Modest Forgetting}\label{subsec:shared-reasoning}

To address the first question, we analyze how later-task updates affect
earlier-task objectives. Consider an update at stage $t$ with parameters
$\theta$. Let $J_i$ denote the objective for task $i$, with
gradient $g_i^{(t)}=\nabla_\theta J_i(\theta)$, and let $g_k^{(t)}$ denote the
update direction estimated from task $k$. After an update of size $\eta$ along
this direction,
\begin{equation*}
J_i(\theta+\eta g_k^{(t)})-J_i(\theta)
=\eta\langle g_i^{(t)},g_k^{(t)}\rangle+O(\eta^2).
\end{equation*}

The sign of the inner product indicates whether an update from task $k$
locally benefits or harms task $i$. The modest forgetting observed above
suggests that the cumulative effect of later-task updates is not strongly
harmful to earlier tasks on average. One
possible explanation is that reasoning tasks share reasoning structure, thus
allowing updates on one task to benefit others on average. We call this pattern
shared reasoning and formalize it as follows.

\begin{assumption}[Mean task-gradient alignment]\label{asm:shared-reasoning}
Across reasoning tasks,
\begin{equation}\label{eq:mean-gradient-alignment}
\mathbb E_{t,i,k}\!\left[
\langle g_i^{(t)},g_k^{(t)}\rangle\right]\geq 0.
\end{equation}
\end{assumption}

For Seq.\ RLVR, every update is drawn from the current task, so $k=t$.
Restricting to earlier tasks $i<t$ and averaging over BWT's stages and tasks
gives the following corollary.

\begin{corollary}[Backward Transfer under Mean Alignment]\label{cor:low-forgetting}
Under Assumption~\ref{asm:shared-reasoning}, later-stage Seq.\ RLVR updates
have a non-negative expected first-order contribution to earlier-task
objectives.
\end{corollary}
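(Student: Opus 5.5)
The plan is to combine the first-order expansion stated just above Assumption~\ref{asm:shared-reasoning} with the Seq.\ RLVR specialization $k=t$, and then take the expectation. For a Seq.\ RLVR update at stage $t$ with step size $\eta$ along the current-task direction $g_t^{(t)}$, the expansion gives, for every earlier task $i<t$,
\begin{equation*}
J_i(\theta+\eta g_t^{(t)})-J_i(\theta)=\eta\langle g_i^{(t)},g_t^{(t)}\rangle+O(\eta^2).
\end{equation*}
The first-order contribution of that update to task $i$ is therefore $\eta\langle g_i^{(t)},g_t^{(t)}\rangle$. Summing over the updates within stage $t$ and over stages $t>i$ gives the first-order part of the change in $J_i$ between the end of its own stage and the final policy. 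This is exactly the quantity that BWT averages, up to the evaluation metric standing in for $J_i$.

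The key step is to show that averaging these terms over the index set used by BWT, namely pairs $(t,i)$ with $i<t$ (and the updates within each stage), yields a non-negative quantity. I would read the expectation $\mathbb E_{t,i,k}$ in Assumption~\ref{asm:shared-reasoning} as ranging over exactly the (stage, evaluated task, update task) triples that occur in the process. For Seq.\ RLVR these are the triples with $k=t$ and $i<t$. Under that reading, the assumption specializes directly to
\begin{equation*}
\mathbb E_{t,\,i<t}\big[\langle g_i^{(t)},g_t^{(t)}\rangle\big]\ge 0.
\end{equation*}
Multiplying by $\eta>0$, which is constant, or more generally a positive per-step learning rate that can be absorbed into the weighting, gives a non-negative expected first-order contribution.

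The main obstacle is this restriction step. A non-negative mean over all triples $(t,i,k)$ does not by itself imply a non-negative mean over the subset $\{k=t,\ i<t\}$. The full average also contains diagonal terms $\|g_i^{(t)}\|^2\ge 0$ and forward pairs $i>t$, which could mask negative backward terms. I would handle this by stating explicitly that the expectation in the assumption is taken under the sampling distribution induced by the training process, so that for Seq.\ RLVR it is the BWT-weighted average over $i<t$, $k=t$. If a stronger guarantee is needed, I would instead invoke the assumption in its off-diagonal form restricted to $i\neq k$, and note that the expectation is symmetric under swapping $i$ and $k$ when the gradients are evaluated at a common $\theta$. With that reading the corollary follows immediately. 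The $O(\eta^2)$ remainder is excluded by the phrase ``first-order contribution,'' so no curvature control is required.
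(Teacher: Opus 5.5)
Your main route matches the paper's proof. The appendix defines the expectation in Assumption~\ref{asm:shared-reasoning} as a weighted average in which the stage weights $\alpha_t$, task-objective weights $p_{t,i}$, and update-task frequencies $w_{t,k}$ are those of the reported term. With $w_{t,t}=1$ for Seq.\ RLVR and BWT's weighting over $i<t$, it becomes exactly $\sum_t\alpha_t\sum_{i<t}p_{t,i}\,\mathbb E[\langle g_i^{(t)},g_t^{(t)}\rangle]\ge 0$, which is the process-weighted reading you adopt to resolve the restriction problem you correctly identified. Drop your fallback, though: symmetry of an off-diagonal mean over all pairs $i\neq k$ does not separate the pairs with $k=t$, $i<t$ from pairs not involving $t$ (or from forward pairs), so negative backward terms could still be masked.
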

\begin{proof}
See Appendix~\ref{app:corollary-proofs}.
\end{proof}

Corollary~\ref{cor:low-forgetting} connects mean alignment to modest
forgetting: later-task updates can benefit earlier-task objectives on average.
We next ask whether shared reasoning can also improve FWT and TLG. Since mean
alignment does not require every task pair to align, the current-task direction
may harm some task objectives, while previous-task directions may complement
it. This motivates CPR.

\subsection{Harnessing shared reasoning with CPR}

To bring previous-task directions into the current update, CPR replays
previous-task prompts and regenerates their responses with the current policy.
CPR stores each prompt with its task identity and latest pass rate. During later
stages, it selects previous-task prompts with intermediate pass rates to replace
a fraction of current-task prompts, changing the task-sampling distribution
without increasing the number of prompts or rollouts per update.

Let $\widehat\rho_t$ be the fraction of replayed prompts in an update at stage
$t$, and let $q_{t,k}$ be the fraction of those prompts drawn from task $k<t$.
The mixed-batch update direction is
\begin{equation}\label{eq:cpr-mixed-gradient}
g_t^{\mathrm{CPR}}=(1-\widehat\rho_t)g_t^{(t)}
+\widehat\rho_t\sum_{k<t}q_{t,k}g_k^{(t)}.
\end{equation}
Applying Assumption~\ref{asm:shared-reasoning} to this mixture yields the
following corollary.

\begin{corollary}[CPR under Mean Alignment]
\label{cor:joint-improvement}
Under Assumption~\ref{asm:shared-reasoning}, CPR updates have a non-negative
expected first-order contribution when averaged over the relevant stages and
task objectives.
\end{corollary}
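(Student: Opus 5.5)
The plan is to expand the CPR direction by linearity and reduce everything to the mean-alignment inequality of Assumption~\ref{asm:shared-reasoning}. This mirrors the proof of Corollary~\ref{cor:low-forgetting}.

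\textbf{First-order expansion.} For any task objective $J_i$ at stage $t$, the first-order expansion from Section~\ref{subsec:shared-reasoning}, applied to the direction in \eqref{eq:cpr-mixed-gradient}, gives
\begin{equation*}
J_i(\theta+\eta g_t^{\mathrm{CPR}})-J_i(\theta)
=\eta\Bigl[(1-\widehat\rho_t)\langle g_i^{(t)},g_t^{(t)}\rangle
+\widehat\rho_t\sum_{k<t}q_{t,k}\langle g_i^{(t)},g_k^{(t)}\rangle\Bigr]+O(\eta^2).
\end{equation*}
The first-order contribution is therefore a convex combination of pairwise inner products $\langle g_i^{(t)},g_k^{(t)}\rangle$ with $k\le t$. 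The weights $w_{t,k}$ are $1-\widehat\rho_t$ for $k=t$ and $\widehat\rho_t q_{t,k}$ for $k<t$. They are non-negative and sum to one, because $\widehat\rho_t\in[0,1]$ and $\sum_{k<t}q_{t,k}=1$.

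\textbf{Averaging.} Next I average over the relevant stages $t$ and task objectives $i$. These are the stages after the first, where replay is active, together with all task indices $i$ whose objectives we track. The averaged first-order term equals $\eta\,\mathbb E_{t,i}\bigl[\sum_k w_{t,k}\langle g_i^{(t)},g_k^{(t)}\rangle\bigr]$. This can be read as an expectation over a joint distribution on $(t,i,k)$, in which $k$ is drawn from the CPR task-sampling distribution $w_{t,\cdot}$.

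\textbf{Main obstacle.} The hardest step is justifying that this CPR-induced distribution over $(t,i,k)$ is one for which Assumption~\ref{asm:shared-reasoning} applies. The assumption states nonnegativity of $\mathbb E_{t,i,k}$ without fixing the weighting. I would handle this in the same way as Corollary~\ref{cor:low-forgetting}: interpret the assumption as holding for the expectation over the task-sampling distribution actually used in the update, here the mixture $w_{t,\cdot}$ together with the stage and objective averaging. Under that interpretation, the expectation above is exactly of the form $\mathbb E_{t,i,k}[\langle g_i^{(t)},g_k^{(t)}\rangle]\ge 0$, and the corollary follows after dropping the $O(\eta^2)$ term.

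\textbf{Fallback.} If a stricter reading with uniform $k$ is required, I would add the assumption that nonnegativity holds for each stage-wise mixture. I would then note that, by linearity, a convex combination of non-negative conditional expectations is non-negative. This route requires stating explicitly which averaging is meant, which is why the statement says ``averaged over the relevant stages and task objectives.''
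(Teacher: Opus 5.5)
Your proposal is correct and follows essentially the same route as the paper. The paper also expands $g_t^{\mathrm{CPR}}$ by linearity with weights $w_{t,t}=1-\widehat\rho_t$ and $w_{t,k}=\widehat\rho_t q_{t,k}$ for $k<t$. It resolves your ``main obstacle'' the way you propose, by defining the expectation in Assumption~\ref{asm:shared-reasoning} as a weighted average over stage weights, task-objective weights, and these realized sampling frequencies, so the corollary holds whenever the assumption holds under that weighting. The only refinement is that the paper applies this separately to each reported term (objectives with $i=t$ for TLG, $i<t$ for BWT, $i>t$ for FWT) rather than as one pooled average over all tracked objectives.
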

\begin{proof}
See Appendix~\ref{app:corollary-proofs}.
\end{proof}

\FloatBarrier
\section{Experiments}\label{sec:experiments}

We first test the shared-reasoning explanation for modest forgetting, then
evaluate whether CPR harnesses shared reasoning to improve FWT and TLG.

\subsection{Setup}\label{subsec:experimental-setup}
We instantiate CRG with Qwen3-4B in two text settings and
Qwen2.5-VL-7B in three visual settings. For each setting, every method uses a
budget of $N=500$ optimizer steps. Appendix~\ref{app:hyperparameters} provides
further experimental details.

\subsection{Evidence for Shared Reasoning}

To evaluate shared reasoning as an explanation for modest forgetting, we
measure task-gradient alignment and trace changes in reasoning behavior across
stages.

\paragraph{Task gradients are positively aligned on average.}
On the ten-task LLM algorithmic sequence, we compute pairwise cosine
similarities between task gradients under Seq.\ RLVR. The mean cosine is
$0.345$, and $82.2\%$ of task pairs have positive cosine
(Figure~\ref{fig:grad_cosine}). This positive mean alignment is consistent with
the shared-reasoning explanation for modest forgetting. Additional gradient
diagnostics are provided in Appendix~\ref{app:gradient-diagnostics}.

\begingroup
\setlength{\intextsep}{8pt}
\begin{figure}[H]
  \centering
  \includegraphics[width=0.88\linewidth]{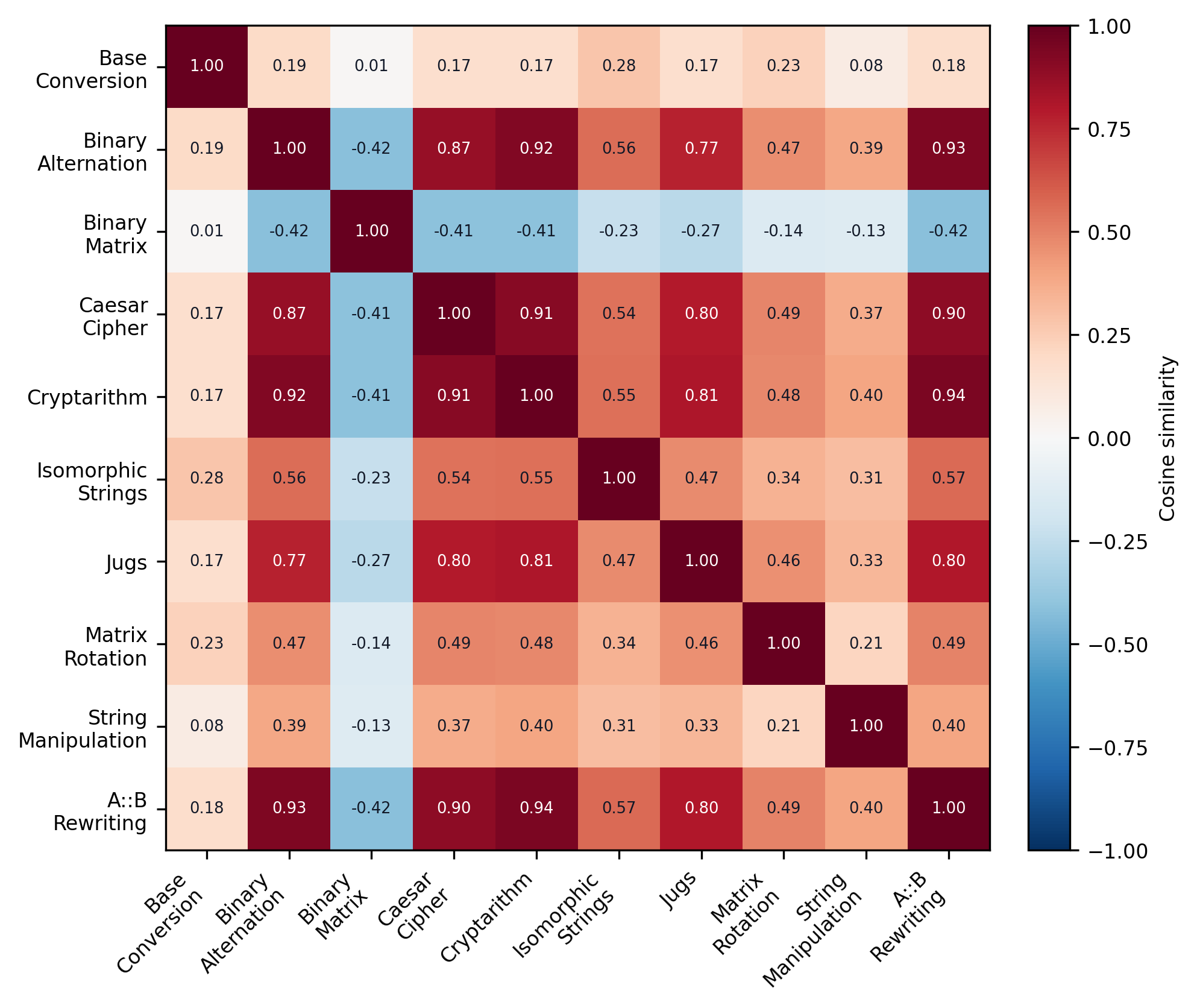}
  \caption{\textbf{Task gradients are positively aligned on average under
  Seq.\ RLVR.} The heatmap shows pairwise cosine similarities between task
  gradients on the ten-task LLM algorithmic sequence. The mean cosine is
  $0.345$, and $82.2\%$ of task
  pairs have positive cosine.}
  \label{fig:grad_cosine}
\end{figure}
\endgroup

\paragraph{Shared reasoning appears in model behavior.}
Having found positive mean alignment between task gradients, we next examine
whether training on other tasks produces corresponding changes in performance
and reasoning. Figure~\ref{fig:reasoning_chain_case_study} tracks jugs success
and representative reasoning excerpts throughout Seq.\ RLVR. Jugs performance
improves both before its own stage and during later-task training, with the
later-task gain directly consistent with shared reasoning as an explanation for
modest forgetting. The excerpts show a parallel progression. Initially,
the model finds an optimal route but repeatedly second-guesses it and emits no
answer. Before jugs training, it takes a longer route without answering. After direct training, it
returns a valid answer after rechecking. At the end of the sequence, it reaches
the same answer through shorter, more direct reasoning. Full transcripts are
provided in Appendix~\ref{app:reasoning_chain_case_study}.

\begin{figure}[t]
  \centering
  \includegraphics[width=\linewidth]{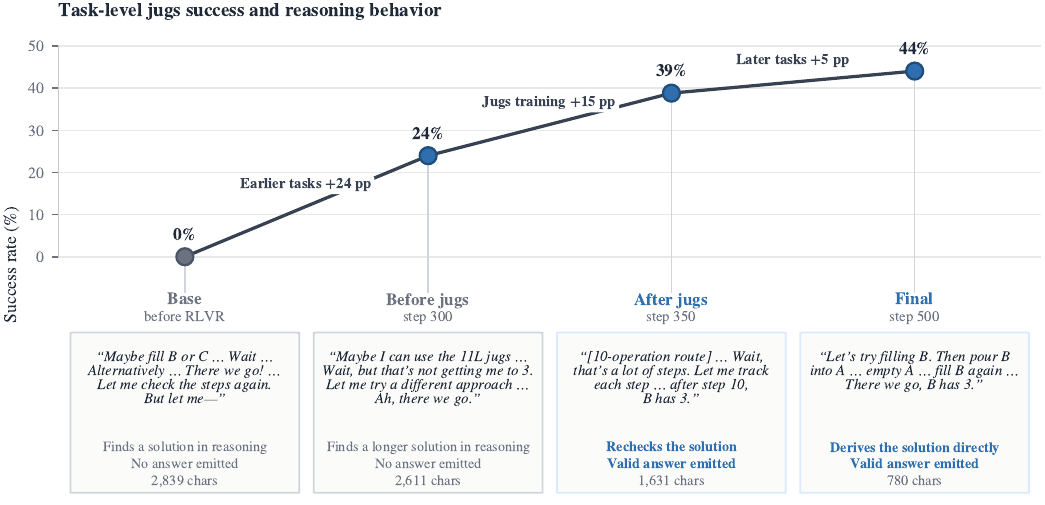}
  \caption{\textbf{A jugs case illustrates shared reasoning in both
  performance and reasoning behavior.} Jugs performance improves before its
  own stage and again during later-task training. The aligned excerpts show a
  progression from unresolved reasoning to valid and more direct solutions.}
  \label{fig:reasoning_chain_case_study}
\end{figure}

We therefore observe shared reasoning at both the update and behavioral
levels. We next test whether CPR can harness it to improve learning on the
arriving and future tasks.

\subsection{CPR Results}

\paragraph{CPR reaches MTRL-level performance on average.}
CPR achieves a mean CTM of $1.03$, up from $0.88$ under Seq.\ RLVR, and is the
only evaluated CL method to reach MTRL-level performance on average
(Table~\ref{tab:family_signatures}).

\paragraph{CPR improves forward transfer and direct task learning.}
Relative to Seq.\ RLVR, CPR raises mean FinalAvg by $6.4$ percentage points.
Of this gain, FWT accounts for $+2.4$ points and TLG for $+4.9$ points, while
BWT accounts for $-0.9$ points. CPR's gain therefore comes from improved
learning on arriving and future tasks (Table~\ref{tab:family_signatures}).

\begin{table}[H]
  \centering
  \caption{\textbf{CPR is the only method to reach MTRL-level performance
  on average.} For each method, FWT, TLG, and BWT report their percentage-point
  contributions to the change in $\mathrm{FinalAvg}$ relative to Seq.\ RLVR.
  Total is their sum.}
  \label{tab:family_signatures}
  \small
  \setlength{\tabcolsep}{2.4pt}
  \begin{tabular}{lrrrrrrrr}
    \toprule
    Metric & \shortstack{Seq.\\RLVR} & \shortstack{Reset\\(ReDo)} & \shortstack{Reset\\(FIRE)} & \shortstack{Regularization\\(EWC)} & \shortstack{Optimizer\\(Muon)} & \shortstack{Isolation\\(OSFT)} & \shortstack{Regularization\\(KL)} & \textbf{\shortstack{Replay\\(CPR)}} \\
    \midrule
    FWT & 0.0 & -1.5 & -5.8 & -1.7 & +1.3 & -3.4 & +0.6 & \textbf{+2.4} \\
    TLG & 0.0 & -7.8 & -4.0 & +0.4 & -0.8 & -7.9 & -5.0 & \textbf{+4.9} \\
    BWT & 0.0 & +3.0 & -14.5 & +2.7 & +2.5 & +2.7 & \textbf{+5.4} & -0.9 \\
    Total & 0.0 & -6.3 & -24.4 & +1.4 & +3.0 & -8.5 & +1.1 & \textbf{+6.4} \\
    \midrule
    Mean CTM & 0.88 & 0.78 & 0.29 & 0.87 & 0.97 & 0.65 & 0.95 & \textbf{1.03} \\
    \bottomrule
  \end{tabular}
\end{table}

\paragraph{Previous-task gradients better align updates with the full task set.}
To examine their role in CPR's gains, we evaluate Qwen3-4B task gradients and compare each current-task gradient with an equally weighted
combination of the current-task gradient and the mean previous-task gradient.
This combination increases cosine similarity with the mean gradient across all
tasks from $0.43$ to $0.79$ on average and yields higher alignment
at all nine stages (Figure~\ref{fig:cpr-gradient-alignment}). Additional gradient
diagnostics are provided in Appendix~\ref{app:gradient-diagnostics}.

\begin{figure}[t]
  \centering
  \includegraphics[width=0.48\linewidth]{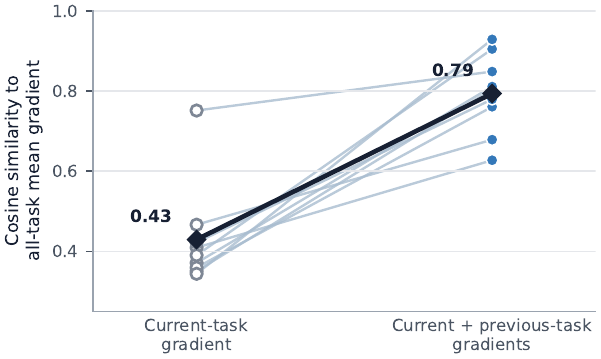}
  \caption{\textbf{Previous-task gradients better align updates with the
  full task set.} For Qwen3-4B, each line compares a current-task
  gradient with an equal-weight combination of that gradient and the mean
  previous-task gradient. Dark diamonds show the averages across the nine
  stages.}
  \label{fig:cpr-gradient-alignment}
\end{figure}

To determine whether CPR benefits from replay itself or from regenerating
responses with the current policy, we compare it with no replay and sample
replay.

\paragraph{Current-policy regeneration drives the replay gain.}
\label{sec:ablations}
We replace CPR's freshly generated responses with trajectories from an earlier
policy. On LLM algorithmic reasoning, this sample-replay variant reaches a
FinalAvg of $47.5\%$, below no replay at $49.9\%$, whereas CPR reaches $63.3\%$
and approaches MTRL at $65.3\%$ (Figure~\ref{fig:cpr_resampling_ablation}). The
gain therefore depends on regenerating responses with the current policy.

\begin{figure}[H]
  \centering
  \includegraphics[width=0.48\linewidth]{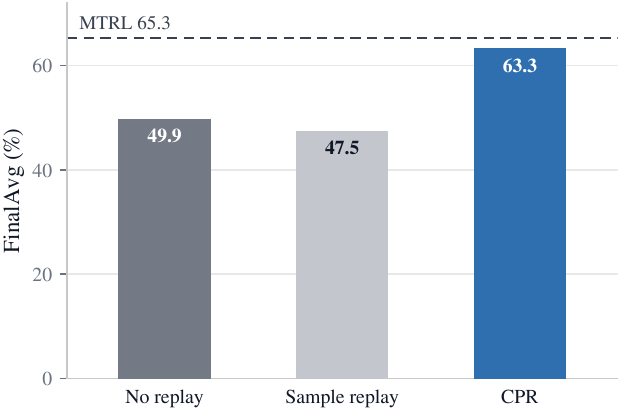}
  \caption{\textbf{Current-policy regeneration drives CPR's replay gain.}
  On LLM algorithmic reasoning, CPR approaches MTRL. Sample replay reuses
  trajectories generated by an earlier policy and underperforms no replay.}
  \label{fig:cpr_resampling_ablation}
\end{figure}

This comparison shows that CPR's gain depends on both previous-task prompts and
current-policy regeneration. Appendix~\ref{app:cpr-ablations} reports additional
replay-pool comparisons, and Appendix~\ref{app:cpr-ratio-robustness} reports
replay-ratio robustness.

\FloatBarrier
\section{Conclusion}

In this paper, we introduce CRG to evaluate whether continual RLVR can reach
MTRL-level performance across text and visual reasoning task sequences. Using CRG, we identify two key observations:
Seq.\ RLVR forgets modestly, yet its performance remains below MTRL. To
understand the latter, we decompose final performance and show that preserving
earlier-task performance alone cannot close the gap. To understand the former,
we identify shared reasoning, whereby training on one task benefits others on
average. We then instantiate this insight as CPR, which replays previous-task
prompts and regenerates responses with the current policy to harness shared
reasoning and improve learning on arriving and future tasks. On CRG, CPR is the
only evaluated CL method to reach MTRL-level performance on average.

\noindent\textbf{Limitations.}\spacefactor=1000\space\space%
Our experiments focus on five task sequences using Qwen3-4B and Qwen2.5-VL-7B.
Extending CRG to agentic settings and larger models remains future work. Our
analysis characterizes local update effects and does not capture nonlinear
dynamics over extended training.

\FloatBarrier
\subsection*{AI use statement}

In this work, we used generative AI tools for formulating mathematical claims,
providing critical ingredients for proving mathematical claims, assisting in
the writing of proofs, refining hypotheses, implementing methods, and
reformatting datasets. We have not used generative AI tools for developing
theoretical models or conceptual frameworks, designing research methodology or
experiments, generating synthetic datasets, assisting with translation,
supporting qualitative and thematic data analysis, or interpreting results,
and the remaining required disclosure tasks are not applicable to this work.
Additionally, we used generative AI tools for creating and modifying scientific
figures, creating and editing software code, drafting and editing parts of the
paper, summarizing existing literature, sourcing information, identifying
relevant literature, and suggesting the paper's structure. We have reviewed all
AI-assisted work. We checked mathematical arguments, verified citations against
their sources, tested AI-assisted code through the reported workflows, and
reviewed AI-assisted figures and manuscript revisions. We take responsibility for the final
content of this work, including text, claims or artifacts produced with the aid
of generative AI.

\subsection*{Ethics statement}

This work does not involve private data or raise ethical concerns.

\subsection*{Reproducibility statement}

Section~\ref{sec:benchmark} defines the task sequences and stage protocol, while
Section~\ref{subsec:experimental-setup} specifies training and evaluation.
Appendix~\ref{app:task_streams} lists the task sequences, dataset statistics,
task construction, and VLM provenance. The rest of the appendix provides exact
metric definitions and boundary conventions, complete hyperparameters, full
results, proofs, CPR details and ablations, gradient diagnostics, and
raw case-study transcripts. Code, configurations, evaluation records, and
scripts for regenerating the reported tables and figures will accompany the
paper release.

\bibliographystyle{style/iclr2027_conference}
\bibliography{bib/references}

\clearpage
\appendix
\makeatletter
\providecommand{\protected@file@percent}{}
\def\addcontentsline#1#2#3{%
  \begingroup
    \let\label\@gobble
    \@ifundefined{@currentHref}{\def\@currentHref{}}{}%
    \addtocontents{#1}{\protect\contentsline{#2}{#3}{\thepage}{\@currentHref}\protected@file@percent}%
  \endgroup}
\makeatother
\section*{\centering\LARGE Appendix}

\etocsetnexttocdepth{subsection}
\etocsetlocaltop{part}
\etocsettocstyle{\subsection*{Appendix Contents}}{}
\localtableofcontents

\section{Continual Reasoning Gym Details}

\subsection{Task Sequences}\label{app:task_streams}

CRG instantiates five task sequences comprising 30 stages.
The text tasks draw from ReasoningGym \citep{reasoninggym2025}, and the visual
tasks draw from VisuLogic \citep{visulogic2025}.

\begin{table}[t]
\centering
\small
\caption{\textbf{Five task sequences define the Continual Reasoning Gym streams.} Arrows show the training sequence. The LLM algorithmic, LLM algebra, VLM quantitative, VLM spatial, and VLM positional streams contain 10, 6, 4, 6, and 4 stages, respectively.}
\label{tab:task_stream_stages}

\begin{tabularx}{\textwidth}{>{\raggedright\arraybackslash}p{0.09\textwidth}>{\raggedright\arraybackslash}p{0.17\textwidth}>{\raggedright\arraybackslash}X}
\toprule
Modality & Stream & Ordered stages \\
\midrule
LLM & Algorithmic Reasoning & Base Conversion $\rightarrow$ Binary Alternation $\rightarrow$ Binary Matrix $\rightarrow$ Caesar Cipher $\rightarrow$ Cryptarithm $\rightarrow$ Isomorphic Strings $\rightarrow$ Jugs $\rightarrow$ Matrix Rotation $\rightarrow$ String Manipulation $\rightarrow$ A::B Rewriting \\
LLM & Algebra Reasoning & Complex Arithmetic $\rightarrow$ Intermediate Integration $\rightarrow$ Polynomial Equations $\rightarrow$ Polynomial Multiplication $\rightarrow$ Simple Equations $\rightarrow$ Simple Integration \\
VLM & Quantitative Reasoning & Linear Quantity $\rightarrow$ Prime Quantity $\rightarrow$ Point Quantity $\rightarrow$ Counter Quantity \\
VLM & Spatial Reasoning & Assembly $\rightarrow$ Cross-sectional $\rightarrow$ Hexahedron $\rightarrow$ Three Views $\rightarrow$ Spatial Orders $\rightarrow$ Polyhedron Rotation \\
VLM & Positional Reasoning & Translation $\rightarrow$ Rotation $\rightarrow$ Comparative $\rightarrow$ Flip \\
\bottomrule
\end{tabularx}
\end{table}

\begin{table}[t]
  \centering
  \small
  \setlength{\tabcolsep}{5.5pt}
  \caption{\textbf{Continual Reasoning Gym contains 30 stages across five
  task sequences.} Each setting is split $8{:}2$ into training and test sets.
  LLM examples are generated procedurally, with at most 20{,}000 total examples
  sampled for each text setting. Symbolic and four-way answers are scored by
  the corresponding task verifier.}
  \label{tab:dataset_statistics}
  \begin{tabular}{llrrl}
    \toprule
    Modality & Setting & Stages & \shortstack{Total\\examples} & Answer format \\
    \midrule
    LLM & Algorithmic & 10 & 20{,}000 & Symbolic \\
    LLM & Algebra & 6 & 20{,}000 & Symbolic \\
    VLM & Quantitative & 4 & 1{,}386 & Four-way choice \\
    VLM & Spatial & 6 & 1{,}043 & Four-way choice \\
    VLM & Positional & 4 & 743 & Four-way choice \\
    \midrule
    \multicolumn{2}{l}{\textbf{Total}} & \textbf{30} & --- & --- \\
    \bottomrule
  \end{tabular}
\end{table}

VisuLogic defines the top-level quantitative, spatial, and positional domains.
We partition each domain into project-defined stage subtypes and fix their
training order in Table~\ref{tab:task_stream_stages}. GPT-5.5 using the
\texttt{xhigh} reasoning-effort setting generated the initial subtype assignments.
Professional researchers then manually reviewed every assignment. Three
quantitative examples remained ambiguous after review and are excluded from the
four stage pools. The researcher-reviewed labels define the final spatial and
positional subtypes.

\subsubsection{LLM Reasoning}

\paragraph{Algorithmic Reasoning (LLM).}
This setting covers ten procedural tasks whose symbolic answers require explicit
multi-step computation. Their order is:
\begin{description}[leftmargin=1.5em,itemsep=1pt,topsep=2pt,parsep=0pt]
  \item[Base Conversion] rewrites a number between numeral bases.
  \item[Binary Alternation] returns the minimum number of swaps that make a binary string alternate, or reports impossibility.
  \item[Binary Matrix] computes, for every cell of a binary matrix, the Manhattan distance to the nearest zero.
  \item[Caesar Cipher] recovers the plaintext of a shift-ciphered message whose shift is unknown.
  \item[Cryptarithm] solves an addition puzzle for the unique letter-to-digit assignment.
  \item[Isomorphic Strings] decides whether two strings admit a one-to-one character mapping that preserves structure.
  \item[Jugs] produces a sequence of fill, pour, and empty operations that measures a target volume.
  \item[Matrix Rotation] rotates a matrix by a specified angle.
  \item[String Manipulation] applies a described sequence of string edits.
  \item[A::B Rewriting] reduces a program under the A::B rewriting system to its normal form.
\end{description}
Each sub-task carries an exact-match verifier over its symbolic answer.

\paragraph{Algebra Reasoning (LLM).}
This setting covers six symbolic-mathematics tasks. Their order is:
\begin{description}[leftmargin=1.5em,itemsep=1pt,topsep=2pt,parsep=0pt]
  \item[Complex Arithmetic] evaluates arithmetic on complex numbers.
  \item[Intermediate Integration] evaluates integrals at a harder difficulty level.
  \item[Polynomial Equations] solves a polynomial for its roots.
  \item[Polynomial Multiplication] expands a product of polynomials.
  \item[Simple Equations] solves a basic algebraic equation for its unknown.
  \item[Simple Integration] evaluates integrals at an easier difficulty level.
\end{description}
Answers are checked symbolically, so numerically equivalent forms are accepted.

\subsubsection{VLM Visual Reasoning}

\paragraph{Quantitative Reasoning (VLM).}
This setting tests changes in graphical quantities, including counts of points,
lines, and angles and arithmetic relations among shapes. Its four stages are
project-defined subtypes of VisuLogic's Quantitative domain:
\begin{description}[leftmargin=1.5em,itemsep=1pt,topsep=2pt,parsep=0pt]
  \item[Linear Quantity] element counts change linearly along a row, column, or sequence, by a constant or position-indexed step.
  \item[Prime Quantity] counts follow a number-theoretic rule such as primality, parity, multiples, or divisibility.
  \item[Point Quantity] the changing quantity is points, intersections, endpoints, or nodes.
  \item[Counter Quantity] other count patterns over elements, blocks, edges, segments, or black and white cells.
\end{description}

\paragraph{Spatial Reasoning (VLM).}
This setting requires reconstructing three-dimensional structure from
two-dimensional figures, including folding surfaces and integrating solid
shapes. Its six stages are project-defined subtypes of VisuLogic's Spatial
domain:
\begin{description}[leftmargin=1.5em,itemsep=1pt,topsep=2pt,parsep=0pt]
  \item[Assembly] assembling, completing, or decomposing 3D parts into matching polyhedron pieces.
  \item[Cross-sectional] identifying the cross-section of a 3D solid cut by a plane.
  \item[Hexahedron] reconstructing a cube or cuboid surface from its net or folded box.
  \item[Three Views] matching a 3D object to its orthographic projection views.
  \item[Spatial Orders] a sequence, classification, or ordering pattern over spatial figures.
  \item[Polyhedron Rotation] comparing a solid under 3D rotations, viewing orientations, or symmetry.
\end{description}

\paragraph{Positional Reasoning (VLM).}
This setting examines transformations that relocate objects while preserving
their constituent elements. Its four stages are project-defined subtypes of
VisuLogic's Positional domain:
\begin{description}[leftmargin=1.5em,itemsep=1pt,topsep=2pt,parsep=0pt]
  \item[Translation] marked elements shift, swap, or reorder positions while keeping their orientation.
  \item[Rotation] elements or orientations rotate or cycle around a center or ordered positions.
  \item[Comparative] a mixed positional relationship in which no single translation, rotation, or flip dominates.
  \item[Flip] the pattern uses mirror reflection across an axis.
\end{description}
All three VLM settings use four-way multiple-choice instances scored by exact
match.

\clearpage
\subsection{Evaluation Metrics}\label{app:metric-definitions}

Stage-wise evaluation records one score for every stage policy and task. Let
$R_{j,i}=S(M_j,\tau_i)$ denote the mean verifier success rate of stage-$j$
policy $M_j$ on the fixed evaluation examples for task $\tau_i$. Evaluating
$j=0,\ldots,T$ on every $i=1,\ldots,T$ gives a $(T{+}1)\times T$ matrix $R$
for each method and task sequence. This evaluation follows GEM \citep{gem2017}, its
continual-RL use in Continual World \citep{continualworld2021}, and its
large-model use in TRACE \citep{trace2023}.

Four entries identify the task-level performance values used by these metrics:
\begin{equation}\label{eq:metric-task-points}
B_i=R_{0,i},\qquad P_i=R_{i-1,i},\qquad
A_i=R_{i,i},\qquad F_i=R_{T,i}.
\end{equation}
They denote performance before any training, immediately before and after
direct training on task $i$, and after all stages. The boundary conventions are
$P_1\defeq B_1$ and $A_T\defeq F_T$.

The metrics are
\begin{equation}\label{eq:metric-aggregates}
\begin{aligned}
\mathrm{BaseAvg} &= \frac{1}{T}\sum_{i=1}^{T}B_i,
&\qquad
\mathrm{FinalAvg} &= \frac{1}{T}\sum_{i=1}^{T}F_i,\\
\mathrm{FWT} &= \frac{1}{T-1}\sum_{i=2}^{T}(P_i-B_i),
&
\mathrm{TLG} &= \frac{1}{T}\sum_{i=1}^{T}(A_i-P_i),\\
\mathrm{BWT} &= \frac{1}{T-1}\sum_{i=1}^{T-1}(F_i-A_i).
\end{aligned}
\end{equation}
FWT and BWT follow \citet{gem2017}, while TLG is our addition. FWT and BWT
average over $T{-}1$ applicable tasks because forward transfer is undefined for
the first task and backward transfer is undefined for the last task. TLG
averages direct learning over all $T$ tasks. Means across the five settings
weight each setting equally.

\subsection{Representative Task Examples}
\label{app:task_examples}

Figure~\ref{fig:example_grid} shows one task from each setting.
\begin{figure}[H]
  \centering
  \includegraphics[width=\linewidth]{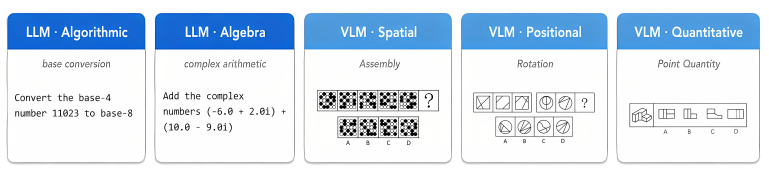}
  \caption{\textbf{Representative CRG tasks.} One example is shown for each
  setting.}
  \label{fig:example_grid}
\end{figure}

The transcripts below show one evaluation prompt and a model response in the
format consumed by the corresponding verifier. They include the model's
\texttt{<think>} reasoning and \texttt{<answer>} span. The latter supplies the
answer scored by the task verifier.

\subsubsection{LLM Reasoning}

\casetitle{Base Conversion}{LLM algorithmic}
\begin{lstlisting}[style=caseTranscript]
[PROMPT]
Your task is to convert a number between two different bases.

If the target base is > 10, use lowercase letters a-z for digits above 9.

Now, convert the base-4 number 11023 to base-8

[MODEL RESPONSE]
<think>
Okay, let's see. I need to convert the base-4 number 11023 to base-8. Hmm, how do I do that? Well, I remember that converting between bases can be done by first converting the number to decimal (base 10) and then to the target base. Let me try that.

First, let's convert 11023 (base 4) to decimal. Each digit represents a power of 4. Starting from the right, the digits are positions 0 to 4. So the number is 1 1 0 2 3.

So, the digits are 1, 1, 0, 2, 3. Let me calculate each digit's value:

1 * 4^4 = 1 * 256 = 256
1 * 4^3 = 1 * 64 = 64
0 * 4^2 = 0 * 16 = 0
2 * 4^1 = 2 * 4 = 8
3 * 4^0 = 3 * 1 = 3

Now add them all up: 256 + 64 is 320, plus 0 is 320, plus 8 is 328, plus 3 is 331. So the decimal equivalent is 331.

Now, I need to convert 331 (base 10) to base 8. To do that, I divide the number by 8 and keep track of the remainders.

331 divided by 8 is 41 with a remainder of 3.
41 divided by 8 is 5 with a remainder of 1.
5 divided by 8 is 0 with a remainder of 5.

So the remainders, from last to first, are 5, 1, 3. Therefore, the base 8 number is 513.

Let me check that. 5*8^2 + 1*8 + 3 = 5*64 + 8 +3 = 320 + 11 = 331. Yes, that's correct. So the answer should be 513.
</think>

<answer>513</answer>
\end{lstlisting}

\medskip

\casetitle{Complex Arithmetic}{LLM algebra}
\begin{lstlisting}[style=caseTranscript]
[PROMPT]
Add the complex numbers: (-6.0 + 2.0i) + (10.0 - 9.0i)

[MODEL RESPONSE]
<think>
Okay, let's see. I need to add two complex numbers: (-6.0 + 2.0i) and (10.0 - 9.0i). Hmm, how do you add complex numbers again? Oh right, you just add the real parts together and the imaginary parts together.

So the first number is -6.0 plus 2.0i. The second one is 10.0 minus 9.0i. Let me break it down. The real parts are -6.0 and 10.0. Adding those together: -6.0 + 10.0. That should be 4.0, right? Because 10 minus 6 is 4.

Now the imaginary parts: 2.0i and -9.0i. Adding those gives 2.0 - 9.0, which is -7.0i. So combining those two results, the sum should be 4.0 - 7.0i. Let me check again to make sure I didn't mix up any signs. The second number's imaginary part is negative, so subtracting 9.0i from 2.0i would be -7.0i. Yeah, that seems right. So the final answer is 4.0 - 7.0i.
</think>

<answer>4.0 - 7.0i</answer>
\end{lstlisting}

\subsubsection{VLM Visual Reasoning}
Each visual example is a four-way VisuLogic-derived problem whose image contains
the answer options. We report the image, shared textual prompt, and model
response.

\casetitle{Linear Quantity}{VLM quantitative}
\begin{center}\includegraphics[width=0.5\linewidth]{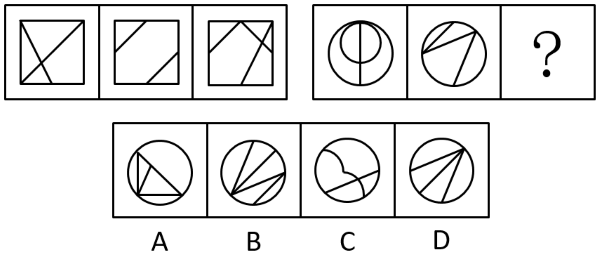}\end{center}
\begin{lstlisting}[style=caseTranscript]
[PROMPT]
From the four given options, select the most suitable one to fill in the question mark, so that a certain regularity is presented:

A: A  
B: B  
C: C  
D: D
[four answer options A--D are shown in the image above]

[MODEL RESPONSE]
<think>The pattern involves dividing shapes into sections and rotating these sections within the shape.

1. In the first row, the square is divided diagonally into two triangles, which are then rotated and shifted within the square.
2. Applying this logic to the circles:
   - The circle is divided into three equal parts (sectors).
   - These sectors are then rotated and shifted within the circle to form the pattern seen in the last column of the first row.

The correct option should reflect this rotation and shifting pattern within the circle. Option A matches this description as it shows the sectors rotated and shifted similarly to the pattern observed in the first row's last column.</think>
<answer>A</answer>
\end{lstlisting}

\medskip

\casetitle{Polyhedron Rotation}{VLM spatial}
\begin{center}\includegraphics[width=0.5\linewidth]{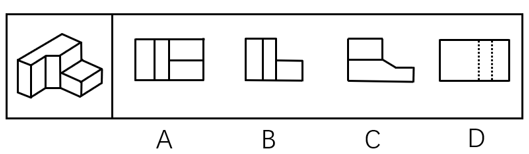}\end{center}
\begin{lstlisting}[style=caseTranscript]
[PROMPT]
From any angle, which one on the right is not a view of the three-dimensional shape given on the left?

A: A
B: B
C: C
D: D
[four answer options A--D are shown in the image above]

[MODEL RESPONSE]
<think>Observation:

1. Shape left:
   - Block structure vertically stacked horizontally.
   - Visible rectangular planes intersecting.

2. Options:
   - A: Horizontal split rectangles aligned vertically.
   - B: Vertical split rectangles aligned horizontally.
   - C: Horizontal split rectangles aligned horizontally.
   - D: Rectangular dashed lines vertically aligned.

3. Analysis:
   - Option A matches horizontal splits vertically aligned.
   - Option B matches vertical splits horizontally aligned.
   - Option C matches horizontal splits horizontally aligned.
   - Option D dashed vertical rectangles align incorrectly.</think>
<answer>D</answer>
\end{lstlisting}

\medskip

\casetitle{Rotation}{VLM positional}
\begin{center}\includegraphics[width=0.5\linewidth]{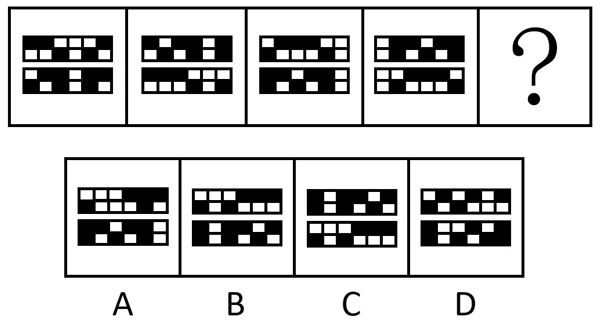}\end{center}
\begin{lstlisting}[style=caseTranscript]
[PROMPT]
From the four given options, choose the most appropriate one to fill in the question mark, so that a certain pattern is formed:

A: A
B: B
C: C
D: D
[four answer options A--D are shown in the image above]

[MODEL RESPONSE]
<think>
The arrangement of squares within each grid seems to alternate between two distinct patterns:

1. Grids with horizontal alignment.
2. Grids with vertical alignment.

Observing closely:
- Option A mirrors horizontally aligned squares.
- Option B mirrors vertically aligned squares.
- Option C mirrors horizontally aligned squares.
- Option D mirrors vertically aligned squares.

Given the alternating pattern:
Option B matches vertically aligned squares.

</think>
<answer>B</answer>
\end{lstlisting}

\clearpage
\raggedbottom
\section{Experimental Details}

\subsection{Training Configuration}\label{app:hyperparameters}

\begin{table}[H]
  \centering
  \small
  \caption{\textbf{All runs use the same core RLVR configuration.} The text
  experiments use Qwen3-4B, and the visual experiments use Qwen2.5-VL-7B.
  Training updates all model parameters, including the VLM vision tower.}
  \label{tab:hyperparameters}
  \begin{tabularx}{\linewidth}{@{}p{0.18\linewidth}p{0.32\linewidth}X@{}}
    \toprule
    Group & Hyperparameter & Value \\
    \midrule
    Model & LLM backbone & Qwen3-4B \\
          & VLM backbone & Qwen2.5-VL-7B \\
    \midrule
    RLVR & Advantage estimator & GRPO \\
         & Entropy coefficient & $1\times10^{-3}$ \\
         & Rollouts per prompt & 5 \\
         & PPO clip range & $0.2$ \\
         & PPO epochs & 1 \\
         & Loss reduction & Token mean \\
    \midrule
    Optimization & Optimizer & AdamW \\
                 & Learning rate & $1\times10^{-6}$ \\
                 & Adam $\beta$ values & $(0.9,0.999)$ \\
                 & Weight decay & $0.01$ \\
                 & Learning-rate schedule & Constant \\
                 & Warmup steps & 0 \\
                 & Gradient clipping & $1.0$ \\
                 & Train batch size & 512 \\
                 & PPO mini-batch size & 256 \\
                 & PPO micro-batch per GPU & 10 for LLM, 4 for VLM \\
                 & Total optimizer steps & 500 \\
    \midrule
    Generation & Temperature & $1.0$ \\
               & Top-$p$ & $1.0$ \\
               & Top-$k$ & $-1$ \\
    \midrule
    Sequence length & Maximum prompt length for LLM & 1024 \\
                    & Maximum response length for LLM & 1024 \\
                    & Maximum prompt length for VLM & 1024 \\
                    & Maximum response length for VLM & 1024 \\
                    & Maximum model length for VLM & 3072 \\
    \midrule
    Parameter update & LoRA rank & 0, corresponding to full-parameter updates \\
                     & VLM vision tower & Unfrozen \\
    \bottomrule
  \end{tabularx}
\end{table}

\begin{table}[H]
  \centering
  \small
  \caption{\textbf{MTRL accesses the complete task pool from the first update,
  whereas CPR replays previous-task prompts.} Both use the shared configuration
  in Table~\ref{tab:hyperparameters}.}
  \label{tab:mtrl-cpr-configuration}
  \begin{tabularx}{\linewidth}{@{}p{0.13\linewidth}p{0.31\linewidth}X@{}}
    \toprule
    Method & Setting & Value \\
    \midrule
    MTRL & Task access & Complete task pool available from the first update \\
         & LLM task sampling & $P_{\mathrm{MTRL}}(k)=1/T$ \\
    \midrule
    CPR & Replay pool & Prompts from previous tasks \\
        & Replay fraction & $\rho=0.5$ \\
        & Maximum replayed prompts per batch & $\min\{\lfloor\rho b\rfloor,\lfloor b/2\rfloor\}$ for batch size $b$ \\
        & Store capacity & One record per observed prompt with task identity and verifier metadata. Footprint reported in Appendix~\ref{app:cpr-resource-overhead} \\
        & Eligibility & Latest pass rate in $[0.24,0.70]$ \\
        & Priority & Smallest distance from pass rate $0.5$ first \\
        & Cooldown & 5 updates \\
    \bottomrule
  \end{tabularx}
\end{table}

\begin{table}[H]
  \centering
  \small
  \caption{\textbf{Each CL baseline changes a specific part of
  the shared training protocol.} The table lists every baseline-specific
  override. All remaining settings follow Table~\ref{tab:hyperparameters}.}
  \label{tab:baseline-configuration}
  \begin{tabularx}{\linewidth}{@{}p{0.19\linewidth}p{0.31\linewidth}X@{}}
    \toprule
    Method & Setting & Value \\
    \midrule
    EWC & Penalty coefficient & $1.0$ \\
        & Fisher decay & $1.0$ \\
        & Numerical stabilizer $\epsilon$ & $10^{-8}$ \\
        & Fisher samples & All available samples \\
    \midrule
    FIRE & Newton--Schulz steps & 5 \\
         & Reset modules & Attention query and key projections \\
         & Task-boundary handling & Reset the optimizer and synchronize rollout weights after reset \\
    \midrule
    ReDo & Dormancy threshold & $\tau=0.1$ \\
         & Calibration set & 8 examples \\
         & LeCun initialization & Disabled \\
    \midrule
    Muon & Momentum & $0.95$ \\
         & Newton--Schulz steps & 5 \\
         & Nesterov momentum & Enabled \\
         & AdamW-side $\beta$ values & $(0.9,0.999)$ \\
         & AdamW-side $\epsilon$ & $10^{-8}$ \\
         & Learning-rate scaling & RMS-matched AdamW \\
         & Weight-decay scaling & $1.0$ \\
         & Maximum matrix aspect ratio & 8 \\
         & Maximum matrix dimension & $65{,}536$ \\
    \midrule
    OSFT & Rank ratio & $0.5$ \\
         & Initialization & At the start of training and after every task switch \\
         & Task-boundary handling & Reset the optimizer and synchronize rollout weights after reinitialization \\
         & Numerical precision & FP32 upcast with BF16 output \\
    \midrule
    KL-to-old-policy & KL coefficient & $0.001$ \\
                     & Old-prompt fraction & $0.5$ \\
                     & Old-policy refresh & At every task switch \\
                     & Checkpoint requirement & Task-boundary checkpoint \\
    \bottomrule
  \end{tabularx}
\end{table}

\clearpage
\subsection{CPR Implementation}
\label{app:cpr-details}

\begin{figure}[H]
  \centering
  \includegraphics[width=\linewidth]{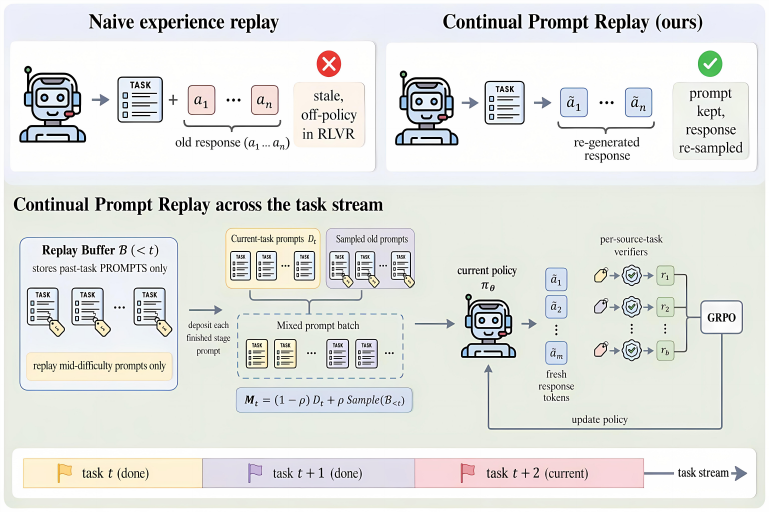}
  \caption{\textbf{CPR turns previous-task prompts into current-policy RLVR
  updates.} The top row contrasts stale trajectory replay with CPR's response
  regeneration. The lower panel shows one CPR update during a later stage. CPR selects
  previous-task prompts, mixes them with current-task prompts, generates every
  response with the current policy, and scores each response with its task
  verifier before the GRPO update.}
  \label{fig:cpr_method}
\end{figure}

\clearpage
Algorithm~\ref{alg:continual_prompt_replay} specifies one CPR update during
stage $t$.

\begin{center}
  \refstepcounter{algorithm}\label{alg:continual_prompt_replay}
  \setlength{\fboxsep}{6pt}
  \fbox{\begin{minipage}{0.95\textwidth}
  \small
  \textbf{Algorithm~\thealgorithm: One CPR update during stage $t$}\\[2pt]
  \begin{tabularx}{\textwidth}{@{}rX@{}}
  \textbf{Input:} & current-task prompt distribution $\mathcal{D}_t$, replay store $\mathcal{B}_{<t}$, task verifiers $\{r_k\}_{k\le t}$, current policy $\pi_\theta$, batch size $b$, and replay fraction $\rho$. \\
  1 & set the maximum number of replayed prompts to $b_R=\min\{\lfloor\rho b\rfloor,\lfloor b/2\rfloor\}$. \\
  2 & construct a batch of $b$ prompts by replacing at most $b_R$ current-task prompts with eligible prompts from $\mathcal{B}_{<t}$. Let $C$ and $R$ denote the resulting current-task and replay subsets, and set $\widehat\rho_t=|R|/b$ for this update. \\
  3 & form the mixed prompt batch $C\cup R$ with each prompt's task identity $m(x)$. \\
  4 & generate a fresh response $y\sim\pi_\theta(\cdot\mid x)$ for every prompt $x\in C\cup R$. \\
  5 & score each response with its task verifier, $r_{m(x)}(x,y)$. \\
  6 & apply one RLVR update to the mixed batch. \\
  7 & refresh the stored record and latest pass rate for each observed prompt, retaining at most one record per prompt. \\
  \textbf{Output:} & updated policy $\pi_{\theta'}$ and replay store $\mathcal{B}_{\le t}$.
  \end{tabularx}
  \end{minipage}}
\end{center}

\paragraph{Replay store and selector.}
CPR stores one record per prompt and refreshes its pass rate whenever the prompt
reappears. The pass rate is the mean verifier score over the five responses
generated for that prompt in its latest update.

The default selector admits previous-task prompts whose latest pass rate lies in
$[0.24,0.70]$. A selected prompt becomes eligible again after five updates.
Among eligible prompts, CPR prioritizes those closest to pass rate $0.5$.

For one update during stage $t$, let $n_{t,k}$ be the number of replayed prompts
drawn from task $k<t$, and let $n_t=\sum_{\ell<t}n_{t,\ell}$. When $n_t>0$,
$q_{t,k}=n_{t,k}/n_t$ and $\sum_{k<t}q_{t,k}=1$, so $q_{t,k}$ is the fraction of
replayed prompts in that update drawn from task $k$. When $n_t=0$,
$\widehat\rho_t=0$ and the replay term is omitted. If fewer than $b_R$ prompts
are eligible, then $0<\widehat\rho_t<\rho$ whenever at least one prompt is
replayed.

\subsection{CPR Resource Overhead}
\label{app:cpr-resource-overhead}

At each CPR update, previous-task prompts replace current-task prompts within
the same 512-prompt batch used by Seq. RLVR, and the current policy draws five
responses per prompt. Both methods therefore generate 2,560 rollouts per
update. CPR changes the task-sampling distribution without increasing the
number of sampled prompts or current-policy rollouts. Its method-specific work
is replay selection and CPU-side store access.

\paragraph{Replay-store footprint.}
CPR stores one record per observed prompt containing the prompt, task and
verifier metadata, latest pass rate, and selector state. The serialized
footprint is 1.24~KiB per prompt on average across the five CRG settings
(Table~\ref{tab:cpr-store-footprint}).

\begin{table}[H]
  \centering
  \small
  \setlength{\tabcolsep}{12pt}
  \caption{\textbf{Average serialized footprint per CPR replay record.}
  Each row reports the mean over prompts in one CRG setting. The final row
  weights the five settings equally.}
  \label{tab:cpr-store-footprint}
  \begin{tabular}{lr}
    \toprule
    Setting & Per prompt (KiB) \\
    \midrule
    LLM Algorithmic   & 1.02 \\
    LLM Algebra       & 1.01 \\
    VLM Quantitative  & 1.42 \\
    VLM Spatial       & 1.39 \\
    VLM Positional    & 1.34 \\
    \midrule
    Mean              & 1.24 \\
    \bottomrule
  \end{tabular}
\end{table}

\clearpage
\paragraph{Runtime.}
On eight NVIDIA B200 GPUs, CPR averages 14.8 hours per 500-update run, compared
with 14.6 hours for Seq.\ RLVR (Table~\ref{tab:cpr-runtime}).

\begin{table}[H]
  \centering
  \small
  \setlength{\tabcolsep}{12pt}
  \caption{\textbf{Average wall-clock time per 500-update run on eight
  NVIDIA B200 GPUs.}}
  \label{tab:cpr-runtime}
  \begin{tabular}{lr}
    \toprule
    Method & Mean runtime (h) \\
    \midrule
    Seq.\ RLVR & 14.6 \\
    CPR          & 14.8 \\
    \bottomrule
  \end{tabular}
\end{table}

\FloatBarrier
\flushbottom
\section{Theoretical Analysis}

\subsection{Proof of Proposition~\ref{prop:decomposition}}\label{app:decomposition-proof}
Adding and subtracting $P_i$ and $A_i$ writes each per-task final score as
\begin{equation}\label{eq:proof-Fi-split}
F_i=B_i+(P_i-B_i)+(A_i-P_i)+(F_i-A_i),
\end{equation}
and averaging over the $T$ tasks gives
\begin{equation}\label{eq:proof-averaged}
\begin{aligned}
\mathrm{FinalAvg}
&=\mathrm{BaseAvg}
+\frac{1}{T}\sum_{i=1}^{T}(P_i-B_i)
+\mathrm{TLG}
+\frac{1}{T}\sum_{i=1}^{T}(F_i-A_i)\\
&=\mathrm{BaseAvg}
+\frac{1}{T}\sum_{i=2}^{T}(P_i-B_i)
+\mathrm{TLG}
+\frac{1}{T}\sum_{i=1}^{T-1}(F_i-A_i),
\end{aligned}
\end{equation}
where the boundary terms $P_1-B_1$ and $F_T-A_T$ vanish by the conventions of
Appendix~\ref{app:metric-definitions}. Matching the $\tfrac{1}{T}$ averaging to the
$\tfrac{1}{T-1}$ normalization of the transfer terms,
\begin{equation}\label{eq:proof-normalize}
\frac{1}{T}\sum_{i=2}^{T}(P_i-B_i)=\frac{T-1}{T}\,\mathrm{FWT},
\qquad
\frac{1}{T}\sum_{i=1}^{T-1}(F_i-A_i)=\frac{T-1}{T}\,\mathrm{BWT},
\end{equation}
which yields the identity of Proposition~\ref{prop:decomposition}.

\clearpage
\subsection{First-Order CPR Analysis}
\label{app:prompt_replay_details}

By Proposition~\ref{prop:decomposition}, any FinalAvg difference between
methods with the same BaseAvg must pass through FWT, TLG, or BWT. We therefore
compare one CPR update with a compute-matched baseline update and ask when
replay changes these terms favorably.

Consider one update in stage $t$ at pre-update parameters $\theta$. Let
$g_i^{(t)}=\nabla_\theta J_i(\theta)$ be the objective gradient for task $i$, and
let $g_k^{(t)}$ be the update direction induced by prompts from task $k$.
CPR combines prompts from the current task $t$ with prompts from
previous tasks $k<t$ through the direction $g_t^{\mathrm{CPR}}$ in
Eq.~\eqref{eq:cpr-mixed-gradient}.
A first-order expansion of task $i$ under CPR gives
\begin{equation}\label{eq:taylor-general}
\begin{aligned}
  \Delta_{i,\mathrm{CPR}}^{(t)}
  &:=J_i(\theta+\eta g_t^{\mathrm{CPR}})-J_i(\theta)\\
  &=\eta(1-\widehat\rho_t)\langle g_i^{(t)},g_t^{(t)}\rangle
  +\eta\widehat\rho_t\sum_{k<t}q_{t,k}
  \langle g_i^{(t)},g_k^{(t)}\rangle+O(\eta^2).
\end{aligned}
\end{equation}
The Seq.\ RLVR baseline uses only the current-task direction, so
\begin{equation}\label{eq:taylor-sequential}
  \Delta_{i,\mathrm{seq}}^{(t)}
  =\eta\langle g_i^{(t)},g_t^{(t)}\rangle+O(\eta^2).
\end{equation}
Subtracting Eq.~\eqref{eq:taylor-sequential} from
Eq.~\eqref{eq:taylor-general} yields the central comparison:
\begin{equation}\label{eq:cpr-vs-sequential}
\begin{aligned}
  \Delta_{i,\mathrm{CPR}}^{(t)}-\Delta_{i,\mathrm{seq}}^{(t)}
  =\eta\widehat\rho_t\left[
  \sum_{k<t}q_{t,k}\langle g_i^{(t)},g_k^{(t)}\rangle
  -\langle g_i^{(t)},g_t^{(t)}\rangle\right]+O(\eta^2).
\end{aligned}
\end{equation}
The bracket compares the average direction recovered from previous-task prompts
with the current-task direction that those prompts replace. CPR improves task
$i$ to first order when this difference is positive after the relevant updates
are averaged.

If every task pair had non-negative alignment, the current-task direction would
already be non-harmful to every objective at first order. Mean alignment is
weaker. The current direction may interfere with specific objectives even when
the average over previous-task directions and relevant objectives is positive.
CPR realizes that average at the current parameters by regenerating responses to
previous-task prompts. Assumption~\ref{asm:shared-reasoning} gives a
non-negative expected first-order contribution under a reported term's
weighting. Equation~\eqref{eq:cpr-vs-sequential} gives the corresponding
CPR--Seq.\ RLVR difference.

The position of task $i$ relative to stage $t$ determines which decomposition
term receives the local effect. Setting $i=t$ gives the contribution during
direct training and hence TLG. Tasks with $i<t$ give the contribution to BWT,
while tasks with $i>t$ give the contribution to FWT. The replay pool controls
the task mixture $q_{t,k}$. The replay ratio controls the scale
$\widehat\rho_t$. The replay-pool ablation probes $q_{t,k}$. The ratio sweep
in Appendix~\ref{app:cpr-ratio-robustness} tests robustness to the configured
replay scale $\rho$.

\clearpage
\subsection{Corollary Proofs}
\label{app:corollary-proofs}

We first expand the expectation in Assumption~\ref{asm:shared-reasoning}.
For one reported term, let $\alpha_t$ denote its non-negative stage weights,
$p_{t,i}$ its non-negative task-objective weights, and $w_{t,k}$ the realized
sampling frequencies of prompts from task $k$. Each set of weights is normalized
over its support. For an update mixture with these weights, define
$\Delta_i^{(t)}=J_i(\theta+\eta\sum_k w_{t,k}g_k^{(t)})-J_i(\theta)$. Then
\begin{equation}\label{eq:mean-gradient-alignment-expanded}
\mathbb E_{t,i,k}\!\left[
\langle g_i^{(t)},g_k^{(t)}\rangle\right]
\defeq \sum_t\alpha_t\sum_i p_{t,i}\sum_k w_{t,k}
\mathbb E_{\mathrm{upd}}\!\left[
\langle g_i^{(t)},g_k^{(t)}\rangle\right],
\end{equation}
where $\mathbb E_{\mathrm{upd}}$ averages prompt, selector, rollout, and update
stochasticity. Averaging the corresponding first-order task changes gives
\begin{equation}\label{eq:taylor-aggregate}
\sum_t\alpha_t\sum_i p_{t,i}
\mathbb E_{\mathrm{upd}}\!\left[\Delta_i^{(t)}\right]
=\eta\sum_t\alpha_t\sum_i p_{t,i}\sum_k w_{t,k}
\mathbb E_{\mathrm{upd}}\!\left[
\langle g_i^{(t)},g_k^{(t)}\rangle\right]+O(\eta^2).
\end{equation}

For Corollary~\ref{cor:low-forgetting}, the Seq.\ RLVR baseline has $w_{t,t}=1$.
BWT uses later stages and earlier-task objectives, so its first-order
contribution is proportional to
\begin{equation}\label{eq:bwt-mean-alignment}
\sum_t\alpha_t\sum_{i<t}p_{t,i}
\mathbb E_{\mathrm{upd}}\!\left[
\langle g_i^{(t)},g_t^{(t)}\rangle\right].
\end{equation}
This is Assumption~\ref{asm:shared-reasoning} under BWT's stage and task
weighting with $k=t$. Its non-negativity proves a non-negative expected
first-order contribution to earlier-task objectives even when individual
$(i,t)$ pairs are negative.

For Corollary~\ref{cor:joint-improvement}, CPR assigns
$w_{t,t}=1-\widehat\rho_t$ and
$w_{t,k}=\widehat\rho_t q_{t,k}$ for $k<t$. Applying
Eq.~\eqref{eq:taylor-aggregate} with $i=t$ gives the first-order contribution to
TLG. Restricting the task weights to $i<t$ gives BWT, while restricting them to
$i>t$ gives FWT. Whenever Assumption~\ref{asm:shared-reasoning} holds under one
of these weightings, the corresponding right-hand side is non-negative. This
proves the corollary for that term. Equation~\eqref{eq:cpr-vs-sequential}
gives the corresponding CPR--Seq.\ RLVR difference.

\clearpage
\section{Additional Results}

\subsection{Full Method Results}

Method rankings vary substantially across task sequences. Figure~\ref{fig:cpr} compares CTM
for all eight sequential methods. Figure~\ref{fig:decomposition} compares the
exact FinalAvg decompositions of the Seq.\ RLVR baseline versus CPR for each task sequence.
Tables~\ref{tab:formal45-llm-algorithmic}--\ref{tab:formal45-vlm-positional-reasoning}
report the complete results.

\begin{figure}[H]
  \centering
  \includegraphics[width=0.95\linewidth]{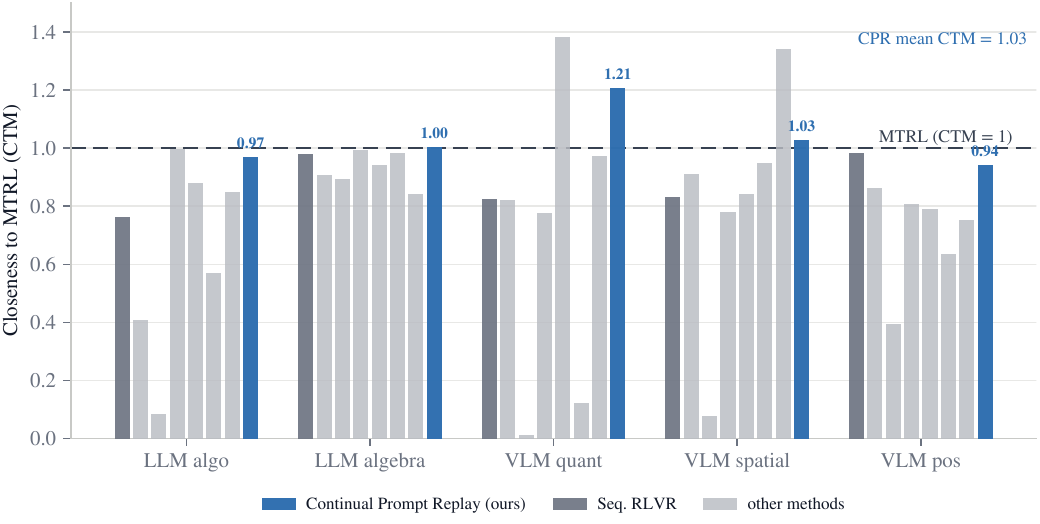}
  \caption{\textbf{CPR is the only evaluated CL method that reaches
  MTRL-level performance on average.} Bars show CTM for eight sequential
  methods in each setting. Seq.\ RLVR is dark gray. The six other interventions
  are light gray. CPR is blue. This level is reached or
  exceeded for LLM algebra, VLM quantitative reasoning, and VLM spatial
  reasoning. Its mean CTM is $1.03$, compared with $0.88$ for
  Seq.\ RLVR.}
  \label{fig:cpr}
\end{figure}

The same FinalAvg difference can reflect different changes in FWT, TLG, and BWT.
Figure~\ref{fig:decomposition} compares the exact FinalAvg decomposition for
the Seq.\ RLVR baseline and CPR for each task sequence.

\begin{figure}[H]
  \centering
  \includegraphics[width=0.96\linewidth]{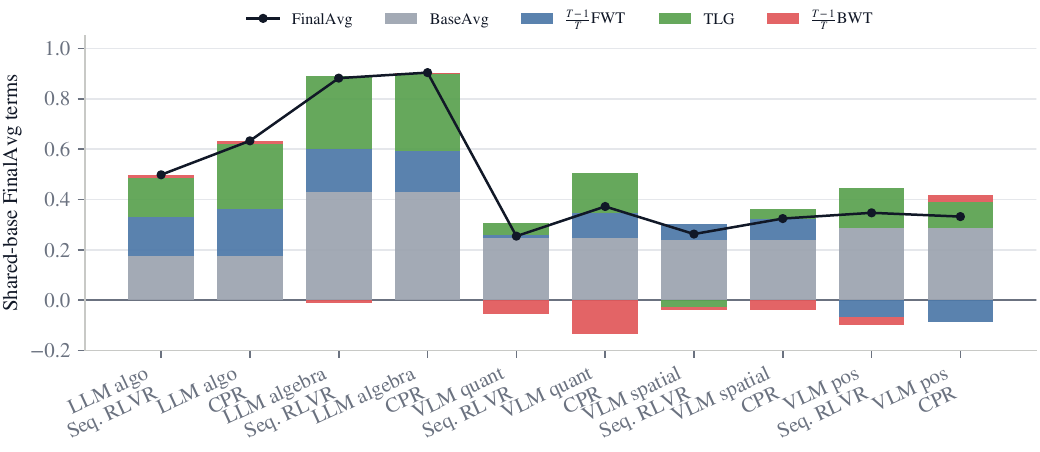}
  \caption{\textbf{CPR's FinalAvg gains arise from FWT, TLG, or both.}
  Each pair compares Seq.\ RLVR with CPR. Stacked bars show
  the four terms BaseAvg, $\frac{T-1}{T}\mathrm{FWT}$, TLG,
  $\frac{T-1}{T}\mathrm{BWT}$. Black markers show FinalAvg. The BWT effect varies
  across task sequences. On VLM quantitative reasoning, larger FWT and TLG
  outweigh a more negative BWT contribution.}
  \label{fig:decomposition}
\end{figure}

\begin{table}[H]
\centering
\small
\caption{\textbf{Complete results for LLM algorithmic reasoning.} Base reports pre-training accuracy. Final reports endpoint accuracy. Final$-$Base, FWT, TLG, BWT use percentage points. The largest reported value in each performance column is bold.}
\label{tab:formal45-llm-algorithmic}
\setlength{\tabcolsep}{2pt}
\renewcommand{\arraystretch}{1.08}
\resizebox{\linewidth}{!}{%
\begin{tabular}{p{0.70in} p{1.08in} p{0.90in} p{0.34in} p{0.44in} p{0.50in} p{0.60in} p{0.50in} p{0.50in} p{0.50in}}
\toprule
Role & Method & Model & Steps & Base (\%) & Final (\%) & Final$-$Base (\%) & FWT (\%) & TLG (\%) & BWT (\%) \\
\midrule
baseline & Seq.\ RLVR & Qwen3-4B & 500 & 17.5 & 49.9 & +32.3 & +17.2 & +15.4 & \textbf{+1.6} \\
MTRL reference & MTRL & Qwen3-4B & 500 & 17.5 & \textbf{65.3} & \textbf{+47.7} & -- & -- & -- \\
CL method & FIRE & Qwen3-4B & 500 & 17.5 & 5.6 & -11.9 & +10.0 & +12.9 & -37.5 \\
CL method & ReDo & Qwen3-4B & 500 & 17.5 & 26.8 & +9.2 & +9.5 & +4.6 & -4.4 \\
CL method & EWC & Qwen3-4B & 500 & 17.5 & 65.0 & +47.5 & +19.9 & \textbf{+30.1} & -0.6 \\
CL method & Muon & Qwen3-4B & 500 & 17.5 & 57.4 & +39.9 & +20.0 & +23.7 & -2.0 \\
CL method & OSFT & Qwen3-4B & 500 & 17.5 & 37.3 & +19.8 & +12.5 & +18.9 & -11.5 \\
CL method & CPR & Qwen3-4B & 500 & 17.5 & 63.3 & +45.8 & \textbf{+20.7} & +26.0 & +1.2 \\
CL method & KL-to-old-policy & Qwen3-4B & 500 & 17.5 & 55.4 & +37.8 & +20.1 & +23.4 & -4.0 \\
\bottomrule
\end{tabular}%
}
\end{table}
\begin{table}[H]
\centering
\small
\caption{\textbf{Complete results for LLM algebra.} Base reports pre-training accuracy. Final reports endpoint accuracy. Final$-$Base, FWT, TLG, BWT use percentage points. The largest reported value in each performance column is bold.}
\label{tab:formal45-llm-algebra}
\setlength{\tabcolsep}{2pt}
\renewcommand{\arraystretch}{1.08}
\resizebox{\linewidth}{!}{%
\begin{tabular}{p{0.70in} p{1.08in} p{0.90in} p{0.34in} p{0.44in} p{0.50in} p{0.60in} p{0.50in} p{0.50in} p{0.50in}}
\toprule
Role & Method & Model & Steps & Base (\%) & Final (\%) & Final$-$Base (\%) & FWT (\%) & TLG (\%) & BWT (\%) \\
\midrule
baseline & Seq.\ RLVR & Qwen3-4B & 500 & 43.2 & 88.2 & +45.1 & +20.3 & +29.1 & -1.1 \\
MTRL reference & MTRL & Qwen3-4B & 500 & 43.2 & 90.0 & +46.8 & -- & -- & -- \\
CL method & CPR & Qwen3-4B & 500 & 43.2 & \textbf{90.4} & \textbf{+47.2} & +19.6 & +30.9 & +0.1 \\
CL method & OSFT & Qwen3-4B & 500 & 43.2 & 88.4 & +45.2 & +23.6 & +21.9 & +4.3 \\
CL method & FIRE & Qwen3-4B & 500 & 43.2 & 80.3 & +37.1 & +15.0 & \textbf{+33.0} & -10.0 \\
CL method & ReDo & Qwen3-4B & 500 & 43.2 & 81.7 & +38.5 & +19.1 & +23.2 & -0.6 \\
CL method & EWC & Qwen3-4B & 500 & 43.2 & 89.3 & +46.2 & +19.7 & +30.9 & -1.3 \\
CL method & KL-to-old-policy & Qwen3-4B & 500 & 43.2 & 75.7 & +32.6 & +16.1 & +4.3 & \textbf{+17.7} \\
CL method & Muon & Qwen3-4B & 500 & 43.2 & 84.8 & +41.6 & \textbf{+25.0} & +23.4 & -3.2 \\
\bottomrule
\end{tabular}%
}
\end{table}
\clearpage
\begin{table}[H]
\centering
\small
\caption{\textbf{Complete results for VLM quantitative reasoning.} Base reports pre-training accuracy. Final reports endpoint accuracy. Final$-$Base, FWT, TLG, BWT use percentage points. The largest reported value in each performance column is bold.}
\label{tab:formal45-vlm-quantitative-reasoning}
\setlength{\tabcolsep}{2pt}
\renewcommand{\arraystretch}{1.08}
\resizebox{\linewidth}{!}{%
\begin{tabular}{p{0.70in} p{1.08in} p{0.90in} p{0.34in} p{0.44in} p{0.50in} p{0.60in} p{0.50in} p{0.50in} p{0.50in}}
\toprule
Role & Method & Model & Steps & Base (\%) & Final (\%) & Final$-$Base (\%) & FWT (\%) & TLG (\%) & BWT (\%) \\
\midrule
baseline & Seq.\ RLVR & Qwen2.5-VL-7B & 500 & 24.8 & 25.5 & +0.7 & +1.4 & +5.1 & -7.2 \\
MTRL reference & MTRL & Qwen2.5-VL-7B & 500 & 24.8 & 30.9 & +6.1 & -- & -- & -- \\
CL method & CPR & Qwen2.5-VL-7B & 500 & 24.8 & 37.2 & +12.5 & +13.2 & \textbf{+16.0} & -18.0 \\
CL method & ReDo & Qwen2.5-VL-7B & 500 & 24.8 & 25.4 & +0.6 & -1.4 & -3.0 & +6.2 \\
CL method & FIRE & Qwen2.5-VL-7B & 500 & 24.8 & 0.4 & -24.4 & +0.1 & -3.2 & -28.3 \\
CL method & EWC & Qwen2.5-VL-7B & 500 & 24.8 & 24.0 & -0.8 & +2.9 & -10.7 & +10.3 \\
CL method & Muon & Qwen2.5-VL-7B & 500 & 24.8 & \textbf{42.7} & \textbf{+17.9} & \textbf{+13.9} & +6.3 & +1.6 \\
CL method & OSFT & Qwen2.5-VL-7B & 500 & 24.8 & 3.8 & -21.0 & -17.0 & -11.3 & +3.9 \\
CL method & KL-to-old-policy & Qwen2.5-VL-7B & 500 & 24.8 & 30.0 & +5.3 & +7.2 & -15.2 & \textbf{+20.0} \\
\bottomrule
\end{tabular}%
}
\end{table}
\begin{table}[H]
\centering
\small
\caption{\textbf{Complete results for VLM spatial reasoning.} Base reports pre-training accuracy. Final reports endpoint accuracy. Final$-$Base, FWT, TLG, BWT use percentage points. The largest reported value in each performance column is bold.}
\label{tab:formal45-vlm-spatial-reasoning}
\setlength{\tabcolsep}{2pt}
\renewcommand{\arraystretch}{1.08}
\resizebox{\linewidth}{!}{%
\begin{tabular}{p{0.70in} p{1.08in} p{0.90in} p{0.34in} p{0.44in} p{0.50in} p{0.60in} p{0.50in} p{0.50in} p{0.50in}}
\toprule
Role & Method & Model & Steps & Base (\%) & Final (\%) & Final$-$Base (\%) & FWT (\%) & TLG (\%) & BWT (\%) \\
\midrule
baseline & Seq.\ RLVR & Qwen2.5-VL-7B & 500 & 24.1 & 26.3 & +2.2 & +7.4 & -2.7 & -1.5 \\
MTRL reference & MTRL & Qwen2.5-VL-7B & 500 & 24.1 & 31.6 & +7.5 & -- & -- & -- \\
CL method & CPR & Qwen2.5-VL-7B & 500 & 24.1 & 32.5 & +8.4 & \textbf{+9.9} & +4.0 & -4.6 \\
CL method & ReDo & Qwen2.5-VL-7B & 500 & 24.1 & 28.7 & +4.7 & +1.9 & -3.7 & \textbf{+8.1} \\
CL method & FIRE & Qwen2.5-VL-7B & 500 & 24.1 & 2.5 & -21.6 & -1.7 & -6.5 & -16.4 \\
CL method & EWC & Qwen2.5-VL-7B & 500 & 24.1 & 24.7 & +0.6 & -7.7 & +8.7 & -2.0 \\
CL method & Muon & Qwen2.5-VL-7B & 500 & 24.1 & 26.7 & +2.6 & -4.2 & +1.4 & +5.7 \\
CL method & OSFT & Qwen2.5-VL-7B & 500 & 24.1 & 30.0 & +5.9 & +1.2 & -1.4 & +7.6 \\
CL method & KL-to-old-policy & Qwen2.5-VL-7B & 500 & 24.1 & \textbf{42.4} & \textbf{+18.4} & +7.7 & \textbf{+14.5} & -3.1 \\
\bottomrule
\end{tabular}%
}
\end{table}
\clearpage
\begin{table}[H]
\centering
\small
\caption{\textbf{Complete results for VLM positional reasoning.} Base reports pre-training accuracy. Final reports endpoint accuracy. Final$-$Base, FWT, TLG, BWT use percentage points. The largest reported value in each performance column is bold.}
\label{tab:formal45-vlm-positional-reasoning}
\setlength{\tabcolsep}{2pt}
\renewcommand{\arraystretch}{1.08}
\resizebox{\linewidth}{!}{%
\begin{tabular}{p{0.70in} p{1.08in} p{0.90in} p{0.34in} p{0.44in} p{0.50in} p{0.60in} p{0.50in} p{0.50in} p{0.50in}}
\toprule
Role & Method & Model & Steps & Base (\%) & Final (\%) & Final$-$Base (\%) & FWT (\%) & TLG (\%) & BWT (\%) \\
\midrule
baseline & Seq.\ RLVR & Qwen2.5-VL-7B & 500 & 28.7 & 34.7 & +6.0 & -8.9 & \textbf{+15.8} & -4.2 \\
MTRL reference & MTRL & Qwen2.5-VL-7B & 500 & 28.7 & \textbf{35.3} & \textbf{+6.6} & -- & -- & -- \\
CL method & CPR & Qwen2.5-VL-7B & 500 & 28.7 & 33.2 & +4.5 & -11.3 & +10.4 & \textbf{+3.4} \\
CL method & ReDo & Qwen2.5-VL-7B & 500 & 28.7 & 30.5 & +1.7 & \textbf{+0.7} & +2.3 & -1.6 \\
CL method & FIRE & Qwen2.5-VL-7B & 500 & 28.7 & 14.0 & -14.8 & -21.8 & +6.5 & -6.6 \\
CL method & Muon & Qwen2.5-VL-7B & 500 & 28.7 & 27.9 & -0.8 & -8.2 & +3.6 & +2.4 \\
CL method & EWC & Qwen2.5-VL-7B & 500 & 28.7 & 28.5 & -0.2 & -7.6 & +5.5 & +0.0 \\
CL method & OSFT & Qwen2.5-VL-7B & 500 & 28.7 & 22.5 & -6.2 & -4.0 & -5.2 & +2.6 \\
CL method & KL-to-old-policy & Qwen2.5-VL-7B & 500 & 28.7 & 26.5 & -2.2 & -9.6 & +10.8 & -7.8 \\
\bottomrule
\end{tabular}%
}
\end{table}

\clearpage
\subsection{Additional CPR Ablation Results}
\label{app:cpr-ablations}

We use two ablations to isolate CPR's replay mechanism.
Table~\ref{tab:cpr_resampling_ablation} tests current-policy regeneration.
Table~\ref{tab:cpr_source_ablation} changes the replay pool.

\begin{table}[H]
  \centering
  \small
  \setlength{\tabcolsep}{4.5pt}
  \caption{\textbf{Current-policy regeneration drives CPR's replay gain.}
  On LLM algorithmic reasoning at 500 steps and $\rho=0.5$, sample replay
  reuses earlier-policy trajectories with importance sampling. CPR regenerates
  responses with the current policy. Delta rows are relative to no replay and
  are computed from unrounded values. The best value among the three sequential
  variants in each metric is bold.}
  \label{tab:cpr_resampling_ablation}
  \begin{tabular}{lrrrr}
    \toprule
    Variant & FinalAvg (\%) & FWT (\%) & TLG (\%) & BWT (\%) \\
    \midrule
    No replay (Seq.\ RLVR) & 49.9 & +17.2 & +15.4 & +1.6 \\
    Sample replay & 47.5 & +19.6 & +10.3 & \textbf{+2.3} \\
    CPR & \textbf{63.3} & \textbf{+20.7} & \textbf{+26.0} & +1.2 \\
    MTRL reference & 65.3 & -- & -- & -- \\
    \midrule
    $\Delta$ Sample replay $-$ no replay & -2.4 & +2.4 & -5.1 & +0.6 \\
    $\Delta$ CPR $-$ no replay & +13.5 & +3.5 & +10.7 & -0.4 \\
    \bottomrule
  \end{tabular}
\end{table}

\clearpage
\begin{table}[H]
  \centering
  \scriptsize
  \setlength{\tabcolsep}{2.4pt}
  \caption{\textbf{Replaying prompts from all previous tasks gives the highest
  mean FinalAvg.} No replay corresponds to Seq.\ RLVR. Replay from only
  the immediately previous task is reported for LLM algorithmic reasoning.
  Bold values mark the maximum within each setting and among the mean rows.}
  \label{tab:cpr_source_ablation}
  \begin{tabular}{llrrrrr}
    \toprule
    Setting & Replay pool & FinalAvg (\%) & CTM & FWT (\%) & TLG (\%) & BWT (\%) \\
    \midrule
    LLM algorithmic & No replay                  & 49.9 & 0.76 & $+17.2$ & $+15.4$ & $\mathbf{+1.6}$ \\
    LLM algorithmic & Immediately previous task & \textbf{66.3} & \textbf{1.02} & $+20.8$ & $\mathbf{+30.1}$ & $+0.0$ \\
    LLM algorithmic & All previous tasks (CPR)  & 63.3 & 0.97 & $+20.7$ & $+26.0$ & $+1.2$ \\
    LLM algorithmic & Previous and current tasks & 64.3 & 0.98 & $\mathbf{+20.9}$ & $+29.1$ & $-1.3$ \\
    \midrule
    LLM algebra & No replay                  & 88.2 & 0.98 & $+20.3$ & $+29.1$ & $-1.1$ \\
    LLM algebra & All previous tasks (CPR)  & 90.4 & 1.00 & $+19.6$ & $+30.9$ & $\mathbf{+0.1}$ \\
    LLM algebra & Previous and current tasks & \textbf{91.2} & \textbf{1.01} & $\mathbf{+20.8}$ & $\mathbf{+31.3}$ & $-0.8$ \\
    \midrule
    VLM quantitative & No replay                  & 25.5 & 0.83 & $+1.4$  & $+5.1$  & $-7.2$ \\
    VLM quantitative & All previous tasks (CPR)  & \textbf{37.2} & \textbf{1.21} & $\mathbf{+13.2}$ & $\mathbf{+16.0}$ & $-18.0$ \\
    VLM quantitative & Previous and current tasks & 36.5 & 1.18 & $+10.9$ & $+3.4$  & $\mathbf{+0.1}$ \\
    \midrule
    VLM spatial & No replay                  & 26.3 & 0.83 & $+7.4$ & $-2.7$ & $-1.5$ \\
    VLM spatial & All previous tasks (CPR)  & \textbf{32.5} & \textbf{1.03} & $\mathbf{+9.9}$ & $\mathbf{+4.0}$ & $-4.6$ \\
    VLM spatial & Previous and current tasks & 29.8 & 0.94 & $+6.1$ & $-0.1$ & $\mathbf{+0.9}$ \\
    \midrule
    VLM positional & No replay                  & \textbf{34.7} & \textbf{0.98} & $-8.9$  & $\mathbf{+15.8}$ & $-4.2$ \\
    VLM positional & All previous tasks (CPR)  & 33.2 & 0.94 & $-11.3$ & $+10.4$ & $\mathbf{+3.4}$ \\
    VLM positional & Previous and current tasks & 31.1 & 0.88 & $\mathbf{-4.3}$  & $+6.7$  & $-1.4$ \\
    \midrule
    Mean & No replay                  & 44.9 & 0.88 & $+7.5$  & $+12.5$ & $-2.5$ \\
    Mean & All previous tasks (CPR)  & \textbf{51.3} & \textbf{1.03} & $+10.4$ & $\mathbf{+17.5}$ & $-3.6$ \\
    Mean & Previous and current tasks & 50.6 & 1.00 & $\mathbf{+10.9}$ & $+14.1$ & $\mathbf{-0.5}$ \\
    \bottomrule
  \end{tabular}
\end{table}

Replaying prompts from all previous tasks raises mean FinalAvg from $44.9\%$ to
$51.3\%$. Allowing current-task prompts reaches $50.6\%$.

\subsection{Robustness to Replay Ratio \texorpdfstring{$\rho$}{rho}}
\label{app:cpr-ratio-robustness}

CPR remains near MTRL across a tenfold range of nonzero replay ratios. Holding
all other settings fixed on LLM algorithmic reasoning, $\rho\in\{0.05,0.20,0.50\}$
gives CTM between $0.97$ and $0.99$.

\begin{table}[H]
  \centering
  \small
  \caption{\textbf{CPR is robust to replay ratio $\rho$ on LLM algorithmic
  reasoning.} With Qwen3-4B, every tested nonzero ratio reaches CTM
  $0.97$--$0.99$, compared with $0.76$ without replay. FinalAvg varies by only
  $1.3$ percentage points across the nonzero ratios. The best sequential value
  in each metric is bold.}
  \label{tab:cpr_ratio_robustness}
  \begin{tabular}{lrrrrr}
    \toprule
    Replay ratio $\rho$ & FinalAvg (\%) & CTM & FWT (\%) & TLG (\%) & BWT (\%) \\
    \midrule
    $0.00$ (Seq.\ RLVR) & 49.9 & 0.76 & +17.2 & +15.4 & \textbf{+1.6} \\
    $0.05$ & \textbf{64.5} & \textbf{0.99} & \textbf{+22.7} & +28.9 & $-2.6$ \\
    $0.20$ & 63.2 & 0.97 & +20.1 & \textbf{+29.2} & $-1.8$ \\
    $0.50$ & 63.3 & 0.97 & +20.7 & +26.0 & +1.2 \\
    \midrule
    MTRL reference & 65.3 & 1.00 & -- & -- & -- \\
    \bottomrule
  \end{tabular}
\end{table}

\subsection{CPR Gains Accompany Both Entropy Increases and Decreases}
\label{app:entropy-analysis}

We test whether lower final-policy entropy consistently accompanies CPR's
performance gains. For each method's final policy, we measure generated-token
predictive entropy on a fixed prompt set under greedy decoding. The reported
value is the token-weighted mean over generated response tokens, in nats.
Figure~\ref{fig:cpr-entropy} compares CPR with the Seq.\ RLVR baseline within each setting.

\begin{figure}[H]
  \centering
  \includegraphics[width=\linewidth]{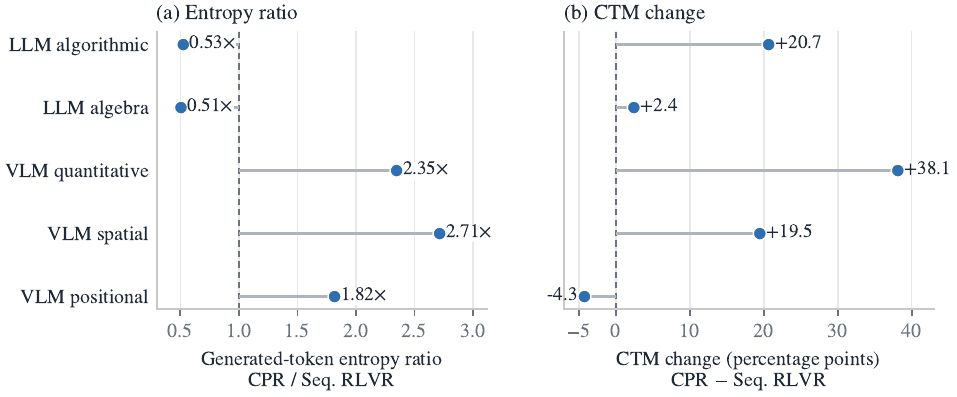}
  \caption{\textbf{CPR gains accompany both entropy increases and decreases.}
  Each row pairs CPR with Seq.\ RLVR in the same setting. The left panel
  reports the ratio of final-policy generated-token entropy, measured on fixed
  prompts under greedy decoding. The right panel reports the corresponding CTM
  change. Dashed lines mark no change. Among the four settings where CPR improves
  CTM, entropy decreases in both LLM settings and increases in VLM quantitative
  and spatial reasoning.}
  \label{fig:cpr-entropy}
\end{figure}

CPR gains occur under opposite entropy changes. Entropy falls by $47$--$49\%$
in the two LLM settings, where CTM rises by $20.7$ and $2.4$ percentage points.
In VLM quantitative and spatial reasoning, entropy rises to $2.35$ and $2.71$
times the corresponding baseline level while CTM rises by $38.1$ and $19.5$ points.
CPR's gains therefore do not consistently coincide with lower final-policy
entropy.

\clearpage
\subsection{Additional Evidence for Shared Reasoning}

We provide additional diagnostics for both roles of shared reasoning. We
compare the current-task direction with a mixture that includes previous-task
directions to test the mechanism CPR uses to improve FWT and TLG. We then examine
task-gradient alignment across domains. With the jugs trajectory, we connect
later-task support to modest forgetting and earlier-task support to forward
transfer at the behavioral level.

\subsubsection{Gradient Alignment Diagnostics}
\label{app:gradient-diagnostics}

We use two diagnostics. First, at the Qwen3-4B base model, we compare the
current-task gradient with a mixture that includes previous-task gradients.
Second, we compare task-gradient alignment across domains.

\paragraph{Adding previous-task gradients improves alignment with the all-task mean.}
At the Qwen3-4B base model $\theta^\star$, we evaluate all ten LLM
algorithmic task gradients and write
\begin{equation}
g_k^\star=\nabla J_k(\theta^\star),
\qquad
\bar g^\star=\frac{1}{10}\sum_{i=1}^{10}g_i^\star.
\end{equation}
Each $g_k^\star$ averages two mini-batch gradients from the language-model head and
final two transformer blocks. Combining all eight FSDP shards gives
$201{,}861{,}632$ coordinates. The parameters remain fixed throughout collection.

For each stage $t\in\{2,\ldots,10\}$ in the LLM algorithmic sequence, we compare
the current-task direction with an equal-weight previous-task mixture:
\begin{equation}
g_t^{\mathrm{current}}=g_t^\star,
\qquad
g_t^{\mathrm{mix}}=0.5g_t^\star+\frac{0.5}{t-1}\sum_{k<t}g_k^\star.
\end{equation}
Only current-task and previous-task gradients enter $g_t^{\mathrm{mix}}$.
We evaluate both directions by their alignment with the all-task mean
$\bar g^\star$. We use equal weighting to isolate the contribution of
previous-task directions from CPR's prompt selector.

Table~\ref{tab:mean-alignment-diagnostic} reports cosine and inner product with
$\bar g^\star$. Across stages 2 through 10, mean cosine rises from $0.43$ to
$0.79$. Mean inner product rises by $18.2\%$, while its standard deviation falls
by $55.3\%$. The mixture has the higher cosine in all nine comparisons and the
higher raw inner product in seven.

\begin{table}[H]
  \centering
  \small
  \setlength{\tabcolsep}{4.5pt}
  \caption{\textbf{Adding previous-task gradients improves alignment with the all-task mean at the Qwen3-4B base model.} Each row uses the same fixed set of ten task gradients. For each stage, the mixture assigns half of its weight to the current task and distributes the other half equally across previous tasks. The all-task mean $\bar g^\star$ serves only as the evaluation direction. Inner products are reported in units of $10^{-5}$.}
  \label{tab:mean-alignment-diagnostic}
  \begin{tabular}{clrrrr}
    \toprule
    & & \multicolumn{2}{c}{Cosine to $\bar g^\star$} & \multicolumn{2}{c}{Inner product $(10^{-5})$} \\
    \cmidrule(lr){3-4}\cmidrule(lr){5-6}
    Stage & Current task & Current & Mixture & Current & Mixture \\
    \midrule
    2 & Binary Alternation & 0.410 & 0.628 & 1.122 & 5.421 \\
    3 & Binary Matrix & 0.467 & 0.679 & 7.811 & 6.616 \\
    4 & Caesar Cipher & 0.351 & 0.804 & 1.384 & 3.801 \\
    5 & Cryptarithm & 0.372 & 0.811 & 0.964 & 2.987 \\
    6 & Isomorphic Strings & 0.751 & 0.849 & 17.877 & 11.039 \\
    7 & Jugs & 0.391 & 0.905 & 1.619 & 4.049 \\
    8 & Matrix Rotation & 0.360 & 0.761 & 3.394 & 4.590 \\
    9 & String Manipulation & 0.422 & 0.780 & 4.125 & 4.806 \\
    10 & A::B Rewriting & 0.344 & 0.929 & 1.051 & 3.193 \\
    \midrule
    Mean & & 0.430 & 0.794 & 4.372 & 5.167 \\
    Std. & & 0.120 & 0.092 & 5.217 & 2.330 \\
    \bottomrule
  \end{tabular}
\end{table}

\paragraph{Shared LLM geometry weakens across domains.}
Figure~\ref{fig:grad_cross_stream} combines separately collected CPR diagnostics
for the ten algorithmic tasks and six algebra tasks using the same model and
parameter subset. Across the 60 cross-domain task pairs, mean cosine is $-0.01$.
The cosine between the two domain-average gradient vectors is $0.006$. Both are
close to zero, placing the measured algorithmic--algebra geometry near
orthogonality. This panel characterizes cross-domain geometry. The comparison
above instead measures how adding previous-task gradients at the Qwen3-4B base model
changes alignment with the all-task mean.

\begin{figure}[H]
  \centering
  \includegraphics[width=0.92\linewidth]{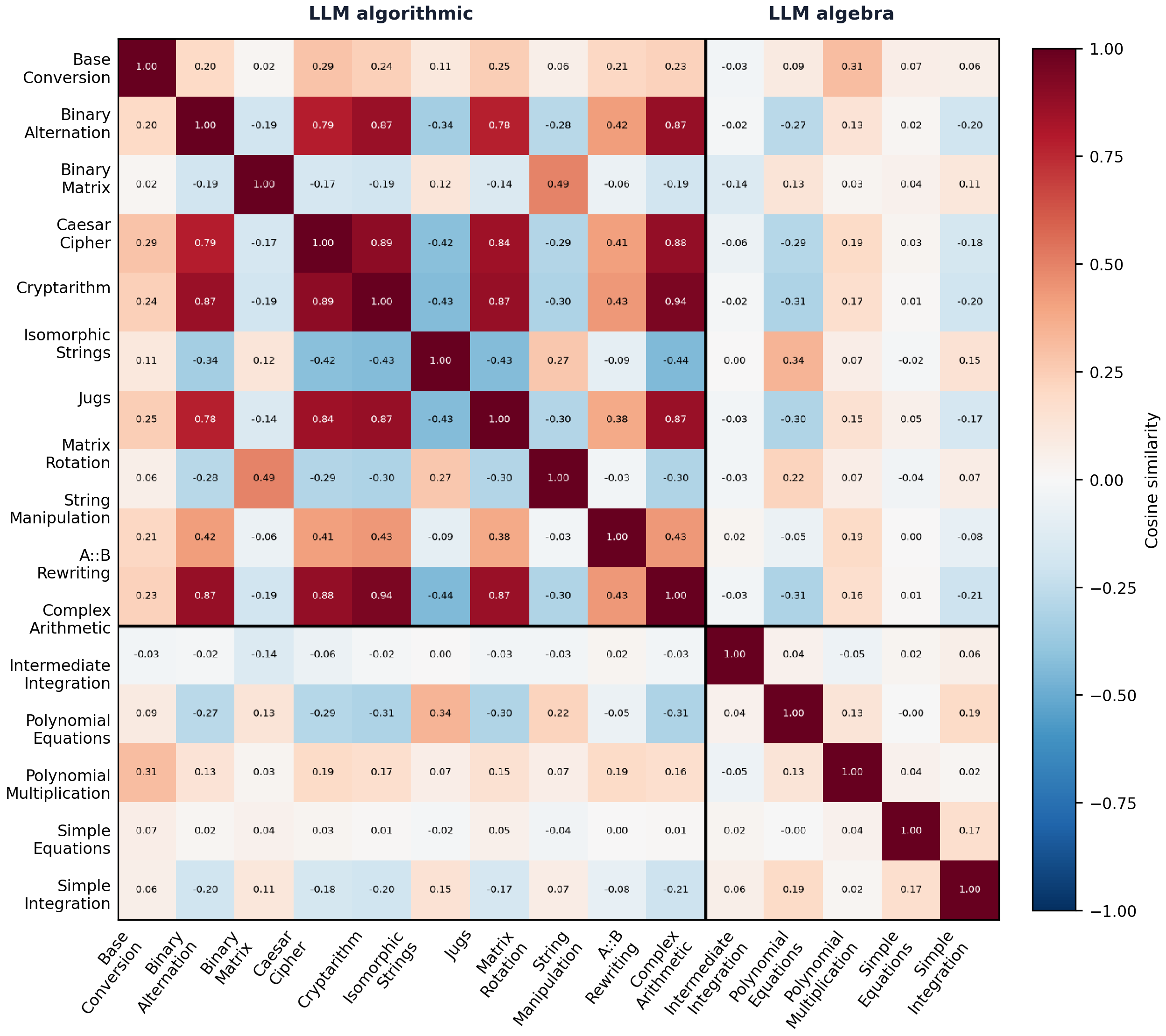}
  \caption{\textbf{LLM gradient alignment weakens across the
  algorithmic--algebra boundary.} Pairwise cosines are shown for the ten
  algorithmic tasks and six algebra tasks. Black lines mark the domain boundary.
  Cross-domain task pairs have mean cosine $-0.01$, while the cosine between the
  two domain-average vectors is $0.006$. The panel combines separately collected
  CPR diagnostics.}
  \label{fig:grad_cross_stream}
\end{figure}

\paragraph{Measured VLM gradients remain positively aligned across domains.}
Figure~\ref{fig:vlm_3stream} combines CPR diagnostics collected separately on
the quantitative, spatial, and positional domains. Every displayed off-diagonal
cosine is positive, including all cross-domain blocks. Comparing these positive
VLM cosines with the near-zero LLM geometry, we find that cross-task alignment
varies with the reasoning domain.

\begin{figure}[H]
  \centering
  \includegraphics[width=0.82\linewidth]{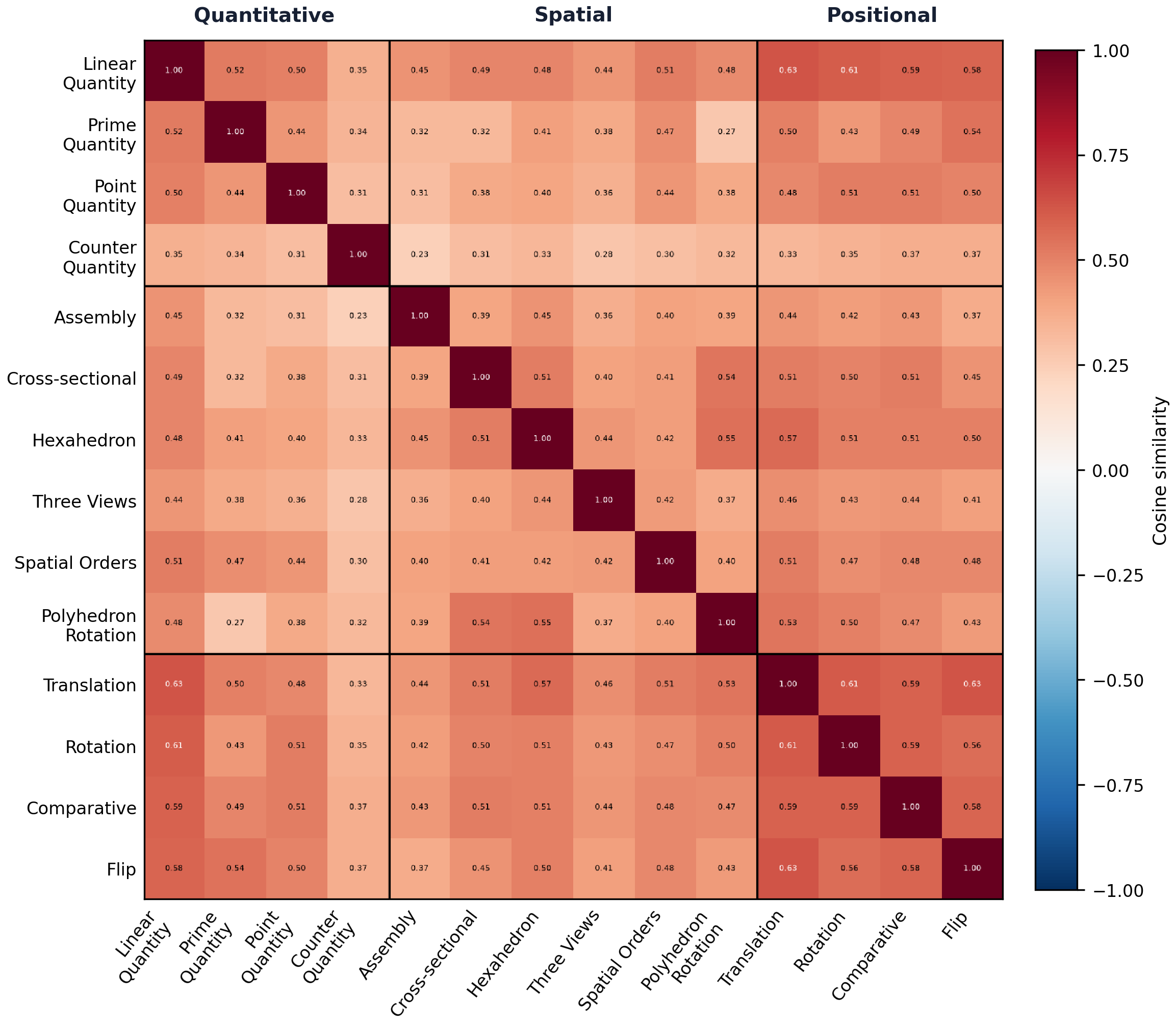}
  \caption{\textbf{The measured VLM task gradients are positively aligned
  across all three domains.} The matrix combines separately collected CPR
  diagnostics for quantitative, spatial, and positional reasoning. Every
  displayed off-diagonal task pair has positive cosine, including pairs that
  cross domain boundaries. Black lines mark those boundaries.}
  \label{fig:vlm_3stream}
\end{figure}

At the frozen LLM checkpoint, adding previous-task directions increases average
alignment to the all-task direction while reducing its variation across stages
2 through 10. Across domains, measured task-gradient alignment ranges from
near-orthogonal algorithmic--algebra geometry to uniformly positive VLM
geometry. We use this contrast to characterize the strength of shared reasoning
as domain-dependent.

\FloatBarrier

\subsubsection{Jugs Reasoning Trajectory}
\label{app:reasoning_chain_case_study}

We use the jugs trajectory to distinguish procedural reasoning from
verifier-valid answer generation across four checkpoints from the same Seq.\
RLVR run on the LLM algorithmic stream
(Figure~\ref{fig:reasoning_chain_case_study}). Before any RLVR, the base model explores alternative routes, derives the optimal
10-operation B-to-A route, and then continues to recheck it until generation
ends without an answer. The pre-jugs policy abandons an initial direction,
reaches A=3 through a longer 28-operation route, and likewise emits no answer.
After direct jugs training, the model derives the optimal route, explicitly
checks all 10 states, and emits a valid answer. The final policy produces the
same answer after later-task training while reducing the reasoning trace from
$1{,}631$ to $780$ characters and omitting the separate rechecking pass.

\paragraph{Raw transcripts.}
\label{app:jugs_full_raw_transcripts}

The four outputs summarized in Figure~\ref{fig:reasoning_chain_case_study}
appear below verbatim.

\casetitle{Base model / before RLVR}{jugs, LLM algorithmic}
\begin{lstlisting}[style=caseTranscript]
<think>
Okay, let's see. I need to figure out how to get exactly 3 liters in any of the jugs using the given moves. The jugs are A (10L), B (11L), and C (11L). The target is 3 liters. All start empty.

First, I remember that these types of puzzles usually involve filling, emptying, and pouring between jugs to reach the desired amount. Since the target is 3, which is smaller than the capacities of all jugs, maybe I need to find a way to measure it through some combination of pours.

Let me think about possible steps. Let's start by trying to fill the largest jugs first. Maybe fill B or C first. But since they are 11L, which is more than 3, maybe that's not directly helpful. Wait, but if I can pour from a full jug into another, maybe I can get the right amount.

Wait, the target is 3. Let me think of possible combinations. For example, if I can get 11 - 8 = 3, but how to get 8? Alternatively, maybe using the 10L jug. If I fill A (10L) and pour into B (11L), but B is empty. So pour A into B. Then B has 10L, A is empty. Then fill A again, pour into B. B can take 1L more (since it's 11L). So after pouring, B is full (11L), and A has 10 - 1 = 9L left. Then empty B. Now, pour the 9L from A into B. Now B has 9L, A is empty. Then fill A again (10L), pour into B. B can take 2 more liters (11-9=2). So after pouring, B is 11L, A has 10-2=8L. Then empty B again. Now pour A's 8L into B. B has 8L. Fill A again (10L), pour into B. B can take 3 more liters (11-8=3). So after pouring, A has 10-3=7L, and B is full. But that's not helpful. Wait, but maybe I can use another jug. Alternatively, maybe using C instead of B.

Alternatively, maybe using the 10L and 11L jugs. Let me think. If I fill B (11L) and pour into A (10L). Then B has 1L left. Then empty A. Now pour the 1L from B into A. Now A has 1L. Then fill B again (11L), pour into A until A is full. A can take 9 more liters. So B would have 11 - 9 = 2L left. Then empty A again. Pour the 2L from B into A. Now A has 2L. Fill B again (11L), pour into A. A can take 8 more liters, so B would have 11 - 8 = 3L. There we go! So B would have 3L. That's the target. Let me check the steps again.

So steps would be:

1. Fill B (11L) → B=11, A=0, C=0.
2. Pour B into A until A is full. A can take 10L, so B has 11-10=1L left. Now A=10, B=1, C=0.
3. Empty A → A=0, B=1, C=0.
4. Pour B into A. Now A=1, B=0, C=0.
5. Fill B again → B=11, A=1, C=0.
6. Pour B into A until A is full. A can take 9 more liters. So B becomes 11-9=2L. Now A=10, B=2, C=0.
7. Empty A → A=0, B=2, C=0.
8. Pour B into A. Now A=2, B=0, C=0.
9. Fill B again → B=11, A=2, C=0.
10. Pour B into A until A is full. A can take 8 liters, so B becomes 11-8=3L. Now B=3, which is the target.

So the moves would be:

fill B, pour B->A, empty A, pour B->A, fill B, pour B->A, empty A, pour B->A, fill B, pour B->A.

But let me
\end{lstlisting}

\medskip
\casetitle{Before jugs training / step 300}{jugs, LLM algorithmic}
\begin{lstlisting}[style=caseTranscript]
<think>
Okay, let's see. I need to figure out how to get exactly 3 units of water in any of the jugs using the given operations. The jugs are A (10L), B (11L), and C (11L). The target is 3L. All start empty.

First, I remember that these types of puzzles usually involve filling and transferring water between jugs to reach the desired amount. Since the target is 3, which is less than the capacities of all jugs, maybe I can use the smaller jugs to measure it.

Let me think. The jugs are 10, 11, and 11. The target is 3. So maybe I can use the 11L jugs to measure. For example, if I fill one of the 11L jugs and pour into the 10L jug, that would leave 1L in the 11L jug. Then, if I pour that 1L into the other 11L jug, maybe I can do something else. Wait, but that's not getting me to 3.

Wait, maybe I need to use the 10L jug. Let me try a different approach. Let's think of possible steps:

1. Fill the 10L jug (A) to 10L.
2. Pour from A to B (11L). Since B is empty, it can take all 10L. Now A is empty, B has 10L.
3. Fill A again (10L).
4. Pour from A to B. B can take 1L more (since it has 10L, and B is 11L). So after pouring, B is full (11L), and A has 10 - 1 = 9L left.
5. Now empty B (11L) to 0L.
6. Pour the 9L from A into B. Now B has 9L, A is empty.
7. Fill A again (10L).
8. Pour from A to B. B can take 2L (since it has 9L, needs 2 more to reach 11). So after pouring, B is 11L, A has 10 - 2 = 8L.
9. Empty B again.
10. Pour the 8L from A into B. Now B has 8L, A is empty.
11. Fill A again (10L).
12. Pour from A to B. B can take 3L (since it has 8L, needs 3 more to reach 11). So after pouring, B is 11L, A has 10 - 3 = 7L.
13. Empty B again.
14. Pour the 7L from A into B. Now B has 7L, A is empty.
15. Fill A again (10L).
16. Pour from A to B. B can take 4L (7 + 4 = 11). So A has 10 - 4 = 6L.
17. Empty B again.
18. Pour the 6L from A into B. Now B has 6L, A is empty.
19. Fill A again (10L).
20. Pour from A to B. B can take 5L (6 + 5 = 11). A has 10 - 5 = 5L.
21. Empty B again.
22. Pour the 5L from A into B. Now B has 5L, A is empty.
23. Fill A again (10L).
24. Pour from A to B. B can take 6L (5 + 6 = 11). A has 10 - 6 = 4L.
25. Empty B again.
26. Pour the 4L from A into B. Now B has 4L, A is empty.
27. Fill A again (10L).
28. Pour from A to B. B can take 7L (4 + 7 = 11). A has 10 - 7 = 3L. 

Ah, there we go. Now A has 3L. So the steps would be:

Fill A, pour to B, fill A, pour to B, empty B, pour A to B, fill A, pour to B, empty B, pour A to B, fill A, pour to B, empty B, pour A to B, fill A, pour to B, empty B, pour A to B, fill A, pour to B, empty B, pour A to B, fill A, pour to B.
\end{lstlisting}

\medskip
\casetitle{After jugs training / step 350}{jugs, LLM algorithmic}
\begin{lstlisting}[style=caseTranscript]
<think>
Okay, let's see. I need to figure out how to get exactly 3 gallons in any of the jugs. The jugs are A (10), B (11), and C (11). The target is 3. So, the possible moves are fill, empty, and pour.

First, I should think about the possible combinations. Since the jugs are 10, 11, and 11, maybe I can use the 11-liter jugs to measure out the 3. Let me try to think of steps.

Let me start by filling the 11-liter jug (B) first. Then, pour it into the 10-liter jug (A). That would leave 1 liter in B. Then, empty A. Now, pour the 1 liter from B into A. Now A has 1 liter, B is empty. Then, fill B again. Now, pour B into A until A is full. A can take 9 more, so B would have 11 - 9 = 2 left. Then empty A again. Now pour the 2 from B into A. Now A has 2. Fill B again. Now pour B into A. A can take 8 more, so B would have 11 - 8 = 3. There we go. So the steps would be:

Fill B, pour B into A. Empty A, pour B into A. Fill B, pour B into A. Empty A, pour B into A. Then fill B again and pour into A. That would leave 3 in B. So the moves would be:

fill B
pour B->A
empty A
pour B->A
fill B
pour B->A
empty A
pour B->A
fill B
pour B->A

Wait, but that's a lot of steps. Let me check again. Let me track each step:

1. Fill B (11)
2. Pour B into A (A is 10, B is 1)
3. Empty A (A is 0, B is 1)
4. Pour B into A (A is 1, B is 0)
5. Fill B (11)
6. Pour B into A (A can take 9, so B has 2 left)
7. Empty A (A is 0, B is 2)
8. Pour B into A (A is 2, B is 0)
9. Fill B (11)
10. Pour B into A (A can take 8, so B has 3 left)

So after step 10, B has 3. So the moves would be the list of these steps. So the answer is the list of these moves.
</think>

<answer>["fill B", "pour B->A", "empty A", "pour B->A", "fill B", "pour B->A", "empty A", "pour B->A", "fill B", "pour B->A"]</answer>
\end{lstlisting}

\medskip
\casetitle{Final policy / step 500}{jugs, LLM algorithmic}
\begin{lstlisting}[style=caseTranscript]
<think>
Okay, let's see. I need to get exactly 3 gallons in any of the jugs. The jugs are A (10), B (11), and C (11). The moves are fill, empty, and pour. Let me think.

First, maybe start by filling the largest jug, like B or C. Let's try filling B. Then pour from B to A. Since A is 10, and B is 11, pouring B into A would fill A to 10, leaving 1 in B. Then empty A. Now, pour the 1 from B to A. Now A has 1. Then fill B again. Now pour B into A. A can take 9 more, so B would have 11-9=2 left. Now B has 2. Then empty A. Pour B's 2 into A. Now A has 2. Fill B again. Pour B into A. A can take 8 more, so B would have 11-8=3. There we go, B has 3. So the moves would be: fill B, pour B->A, empty A, pour B->A, fill B, pour B->A, empty A, pour B->A, fill B, pour B->A. That gives B as 3.
</think>

<answer>["fill B", "pour B->A", "empty A", "pour B->A", "fill B", "pour B->A", "empty A", "pour B->A", "fill B", "pour B->A"]</answer>
\end{lstlisting}

Together, we use these diagnostics to distinguish the two roles of shared
reasoning. Positive mean gradient alignment persists under both Seq.\ RLVR and
CPR, while its magnitude varies across domains. In the jugs trajectory, the
learned solution persists through later training, and the final policy produces
it with a shorter reasoning trace, consistent
with modest forgetting. Success also improves before direct jugs training,
illustrating how shared reasoning can benefit forward transfer.

\subsection{Pass@\texorpdfstring{$K$}{K} Comparison with the Base Model}
\label{app:passk}

We examine whether CPR improves both single-sample accuracy and solution
coverage as the sampling budget increases. We compare Qwen3-4B before RLVR with
the final 500-step CPR policy with current-task replay. Each policy produces 16
responses at temperature $0.7$ for the same 64 held-out prompts per LLM
algorithmic task. We estimate pass@$1$, pass@$4$, pass@$8$, and pass@$16$ from
these responses following \citet{codex2021}.

\begin{table}[H]
  \centering
  \scriptsize
  \setlength{\tabcolsep}{5pt}
  \caption{\textbf{CPR improves both pass@$1$ and higher-$K$ success rates.}
  CPR with current-task replay outperforms the base model throughout the
  sampling curve. Both policies use the same 64 held-out prompts per task and
  sample 16 responses at temperature $0.7$. The higher value within each task
  and $K$ is bold.}
  \label{tab:passk}
  \begin{tabular}{llrrrr}
    \toprule
    Task & Policy & pass@1 & pass@4 & pass@8 & pass@16 \\
    \midrule
    Base Conversion & Base & 57.7 & 81.0 & 87.5 & 90.6 \\
                    & CPR + current & \textbf{98.1} & \textbf{99.8} & \textbf{100.0} & \textbf{100.0} \\
    Binary Alternation & Base & 0.6 & 1.8 & 2.7 & 3.1 \\
                       & CPR + current & \textbf{43.1} & \textbf{55.4} & \textbf{60.1} & \textbf{64.1} \\
    Binary Matrix & Base & 17.9 & 27.7 & 31.7 & 34.4 \\
                  & CPR + current & \textbf{51.0} & \textbf{56.6} & \textbf{58.9} & \textbf{60.9} \\
    Caesar Cipher & Base & 0.2 & 0.8 & 1.6 & 3.1 \\
                  & CPR + current & \textbf{5.8} & \textbf{9.5} & \textbf{10.2} & \textbf{10.9} \\
    Cryptarithm & Base & 0.0 & 0.0 & 0.0 & 0.0 \\
                & CPR + current & 0.0 & 0.0 & 0.0 & 0.0 \\
    Isomorphic Strings & Base & 59.4 & 88.7 & 95.0 & 98.4 \\
                       & CPR + current & \textbf{99.9} & \textbf{100.0} & \textbf{100.0} & \textbf{100.0} \\
    Jugs & Base & 0.0 & 0.0 & 0.0 & 0.0 \\
         & CPR + current & \textbf{51.5} & \textbf{53.1} & \textbf{53.1} & \textbf{53.1} \\
    Matrix Rotation & Base & 33.2 & 38.9 & 42.1 & 45.3 \\
                    & CPR + current & \textbf{97.6} & \textbf{99.6} & \textbf{99.9} & \textbf{100.0} \\
    String Manipulation & Base & 4.9 & 11.4 & 15.1 & 18.8 \\
                        & CPR + current & \textbf{72.9} & \textbf{84.2} & \textbf{87.9} & \textbf{90.6} \\
    A::B Rewriting & Base & 0.0 & 0.0 & 0.0 & 0.0 \\
                   & CPR + current & \textbf{36.0} & \textbf{55.1} & \textbf{62.7} & \textbf{68.8} \\
    \midrule
    Macro-average & Base & 17.4 & 25.0 & 27.6 & 29.4 \\
                  & CPR + current & \textbf{55.6} & \textbf{61.3} & \textbf{63.3} & \textbf{64.8} \\
    \bottomrule
  \end{tabular}
\end{table}

CPR raises macro pass@$1$ from $17.4\%$ to $55.6\%$ and macro pass@$16$ from
$29.4\%$ to $64.8\%$. Its gains remain large throughout the sampling curve:
$38.2$, $36.3$, $35.7$, and $35.5$ points at $K\in\{1,4,8,16\}$. At $K=16$,
CPR improves nine of the ten tasks, including jugs from $0.0\%$ to $53.1\%$
and A::B rewriting from $0.0\%$ to $68.8\%$.

\subsection{Trained Small Models versus Larger Zero-Shot Models}
\label{sec:frontier_reference}

How does training a small model compare with relying on a larger model's
zero-shot performance? We compare the trained Qwen3-4B and Qwen2.5-VL-7B
policies with Claude Opus~4.8, GPT-5.5, and larger same-family Qwen backbones.
The Qwen references are Qwen3-32B for text reasoning and
Qwen2.5-VL-32B-Instruct and Qwen2.5-VL-72B-Instruct for visual reasoning. Every
model receives the same prompt, verifier, fixed evaluation examples, and
1024-token response budget.

\begin{table}[h]
  \centering
  \small
  \setlength{\tabcolsep}{3.5pt}
  \caption{\textbf{CPR outperforms larger same-family zero-shot models across
  all five settings.} Scores are mean verifier accuracy on CRG's fixed
  evaluation sets with a 1024-token response budget. Claude Opus~4.8, GPT-5.5,
  and the larger Qwen backbones are evaluated zero-shot. Base is the untrained
  small model. The Seq.\ RLVR, CPR, and MTRL columns report trained policies.
  The best score in each row is bold.}
  \label{tab:frontier_reference}
  \begin{tabular}{lrrrrrrrr}
    \toprule
    Setting & Opus 4.8 & GPT-5.5 & Qwen-32B & Qwen-72B & Base & Seq.\ RLVR & CPR & MTRL \\
    \midrule
    LLM algorithmic & 53.7 & \textbf{77.7} & 29.3 & -- & 17.5 & 49.9 & 63.3 & 65.3 \\
    LLM algebra & 84.0 & \textbf{94.8} & 66.6 & -- & 43.2 & 88.2 & 90.4 & 90.0 \\
    \midrule
    VLM quantitative & 10.5 & 5.5 & 27.7 & 26.6 & 24.8 & 25.5 & \textbf{37.2} & 30.9 \\
    VLM spatial & 19.9 & 0.8 & 27.7 & 28.9 & 24.1 & 26.3 & \textbf{32.5} & 31.6 \\
    VLM positional & 19.1 & 8.6 & 27.7 & 25.8 & 28.7 & 34.7 & 33.2 & \textbf{35.3} \\
    \bottomrule
  \end{tabular}
\end{table}

CPR exceeds the larger same-family Qwen references in every setting. The trained
4B text policy outperforms Qwen3-32B, and the trained 7B visual policy outperforms
both Qwen2.5-VL-32B-Instruct and Qwen2.5-VL-72B-Instruct. The frontier models
show a modality-dependent pattern. GPT-5.5 remains strongest in both text
settings, while CPR exceeds Claude Opus~4.8 and GPT-5.5 in all three visual
settings.

\clearpage
\section{Extended Related Work}
\label{app:extended_related_work}

\paragraph{RLVR task suites.}
RLVR task suites supply verifier-scored reasoning problems. Reasoning Gym
\citep{reasoninggym2025} supplies 100+ procedural generators with algorithmic
verifiers and parametric difficulty. VisuLogic \citep{visulogic2025} is a
human-verified visual-reasoning benchmark with a rule-based RL baseline.
AgentGym-RL \citep{agentgymrl2025} trains multi-turn LLM agents from environment
outcome rewards. Verifiable rewards can also incentivize correct reasoning in
base LLMs \citep{rlvrreasoning2025}. These works establish task-generation and
reward interfaces for RLVR. Their training protocols expose a fixed task
inventory, with some curricula varying difficulty or interaction horizon.
CRG orders task identities into stages and evaluates one
policy after each arrival.

\paragraph{Forgetting in CL.}
Catastrophic forgetting describes the loss of earlier-task performance after
training on later tasks and remains a central failure mode in CL
\citep{ewc2017}. Continual World measures this behavior in reinforcement-learning
task sequences \citep{continualworld2021}, while TRACE treats earlier-task
performance as a primary endpoint for sequentially fine-tuned language models
\citep{trace2023}.
Reinforcement and supervised updates exhibit different forgetting behavior.
RL's Razor finds that online reinforcement learning forgets less than supervised
fine-tuning when the two are matched on new-task performance
\citep{rlsrazor2025}. Zhang et al. report that reinforcement fine-tuning causes less forgetting of
prior knowledge than SFT in multimodal language models
\citep{rftpreserve2025}. We take these results as evidence that
reinforcement-learning updates may cause less forgetting of earlier capabilities
than supervised updates. Low forgetting alone does not determine whether
sequential training reaches joint training on the final task set. Final
performance also depends on how well arriving tasks are learned and how training transfers across
tasks. This distinction motivates our comparison between continual RLVR and
MTRL.

\paragraph{CL methods.}
These methods alter how a learner updates across task stages.
Their mechanisms include regularization, plasticity restoration, parameter
isolation, replay, and optimization. EWC \citep{ewc2017} regularizes parameters
by Fisher information, ReDo \citep{redo2023} reactivates dormant units, FIRE
\citep{fire2026} controls re-initialization geometry, SPHERE \citep{sphere2026}
targets spectral plasticity, OSFT \citep{osft2025} constrains updates to a
low-rank subspace, and Muon \citep{muon2025} changes the optimizer geometry.
Their original evaluations span supervised CL, deep RL, and
large-model optimization. Our comparison places representatives of these
mechanisms under one continual-RLVR protocol and measures whether they narrow
the endpoint gap to MTRL.

\end{document}